\documentclass{article}

\PassOptionsToPackage{numbers}{natbib}
\usepackage[preprint]{neurips_2026}

\usepackage[utf8]{inputenc}
\usepackage[T1]{fontenc}
\usepackage[colorlinks=true,urlcolor=blue,citecolor=black,linkcolor=black]{hyperref}
\usepackage{url}
\usepackage{booktabs}
\usepackage{amsfonts}
\usepackage{amsmath}
\usepackage{amssymb}
\usepackage{mathtools}
\usepackage{amsthm}
\usepackage{algorithm}
\usepackage{algpseudocode}
\newcommand{\REQUIRE}{\Require}
\newcommand{\ENSURE}{\Ensure}
\newcommand{\STATE}{\State}

\newcommand{\FOR}[1]{\For{#1}}
\newcommand{\ENDFOR}{\EndFor}
\newcommand{\IF}[1]{\If{#1}}
\newcommand{\ENDIF}{\EndIf}

\newcommand{\WHILE}[1]{\While{#1}}
\newcommand{\ENDWHILE}{\EndWhile}
\newcommand{\RETURN}{\Return}
\usepackage{graphicx}
\usepackage[table]{xcolor}
\usepackage{subcaption}
\usepackage{multirow}
\usepackage{enumitem}
\usepackage{bm}
\usepackage{alphalph}
\usepackage{nicefrac}
\usepackage{microtype}
\usepackage{tikz}
\usetikzlibrary{arrows.meta}
\usepackage{placeins}

\newtheorem{theorem}{Theorem}
\newtheorem{lemma}[theorem]{Lemma}

\newcommand{\R}{\mathbb{R}}
\newcommand{\E}{\mathbb{E}}

\newcommand{\bq}{\mathbf{q}}
\newcommand{\bk}{\mathbf{k}}

\newcommand{\bx}{\mathbf{x}}

\newcommand{\be}{\mathbf{e}}

\newcommand{\hbk}{\hat{\mathbf{k}}}

\newcommand{\ctxl}{C}
\DeclareMathOperator{\softmax}{softmax}

\title{RoPE-Aware Bit Allocation for KV-Cache Quantization}

\author{%
  Fengfeng Liang\textsuperscript{1} \quad
  Yuechen Zhang\textsuperscript{2,3} \quad
  Jiaya Jia\textsuperscript{1,$*$}
  \\[6pt]
  \normalsize
  \textsuperscript{1}Hong Kong University of Science and Technology \\
  \textsuperscript{2}The Chinese University of Hong Kong \\
  \textsuperscript{3}MiMo, Xiaomi Corporation \\[3pt]
  \textsuperscript{$*$}Corresponding author
}
\date{}

\begin{document}

\maketitle

\begin{abstract}
Existing low-bit KV-cache quantizers typically treat each cached key as a flat vector. Under RoPE, however, the contribution of a cached key to a future attention logit decomposes into a position-dependent sum over two-dimensional frequency blocks. This makes key-cache quantization a block-wise bit-allocation problem: high-energy RoPE blocks are more sensitive to quantization error and should therefore receive more bits. We introduce \emph{Block-GTQ}, a RoPE-aware bit allocator for key-cache quantization built on TurboQuant-MSE (TQ-MSE). For each layer and KV head, Block-GTQ computes a label-free energy score for each RoPE block and greedily allocates integer bit widths using marginal gains. Under matched K/V bit budgets, Block-GTQ better preserves RoPE query-key logits on a diverse ten-model diagnostic panel---at both $2$ and $3$ b/dim K-only, cutting per-layer MAE by $32$--$80\%$ across models and winning all $367/367$ layer comparisons at each budget against uniform TQ-MSE---and these fidelity gains translate to stronger downstream long-context retrieval, understanding, and reasoning. At K2V2 on Llama-3.1-8B-Instruct, Block-GTQ raises the six-task NIAH average
from \(70.6\) to \(97.4\), and the eight-task LongBench-EN average from
\(36.87\) to \(53.31\), relative to the uniform-allocation
TQ-MSE baseline. On AIME 2024/2025 with DeepSeek-R1-Distill-Qwen-7B,
and without relying on an fp16 recent-key buffer, Block-GTQ at K3V2 scores
\(51.7/37.5\), close to fp16's \(54.2/37.9\), whereas uniform-allocation
TQ-MSE collapses to \(0.0/0.0\). We further implement a packed-cache serving path that avoids materializing an fp16 KV cache: on a single H800 GPU with Qwen2.5-3B-Instruct, the packed K3V3 path achieves \(3.24\times\) KV-cache compression with quality comparable to fp16, runs \(1.34\times\) faster than fp16 FlashAttention2 at \(128\)K context, reduces peak memory from \(56.31\) GB to \(19.85\) GB, and remains feasible at \(256\)K/\(512\)K where fp16 OOMs. Our code is available at \url{https://github.com/JIA-Lab-research/blockgtq}.
\end{abstract}

\vspace{-3mm}
\section{Introduction}
\label{sec:intro}
\vspace{-4mm}
Long-context inference makes the KV cache the dominant sequence-dependent
memory cost in autoregressive decoding. The cache stores one key and one value
vector for every past token in every layer, and each decode step must access
this growing state to attend over the context. For example, in a GQA-style
70B-class model with 80 layers, 8 KV heads, and 128-dimensional heads, an fp16
KV cache requires about 320 KiB per token, or roughly 40 GiB at a 128K-token
context. This creates two coupled bottlenecks: capacity, because the resident
cache must fit in memory, and bandwidth, because the attention kernel must
stream the cached K/V at each decode
step~\cite{pope2022efficiently,dao2022flashattention,dao2023flashattention2,kwon2023efficient,sheng2023flexgen,liu2024cachegen}.

KV-cache quantization mitigates this pressure by storing cached keys and
values with fewer bits~\cite{liu2024kivi,hooper2024kvquant,he2024zipcache,zhang2024coupled,zandieh2025turboquant}.
Most methods cast the problem as vector compression, choosing a quantization
granularity over heads, channels, groups, or tokens so that dequantized vectors
remain close to the originals. This view is natural for storage, but it does
not capture how cached keys are used. A value error affects the post-softmax
weighted sum, whereas a key error perturbs the pre-softmax logits seen by
future queries and can change the attention distribution.

For RoPE attention~\cite{su2024roformer}, this key-logit computation is block structured. Let
\(\Delta\) be the relative position between a future query
\(\bq\in\mathbb{R}^{d_h}\) and a cached key
\(\bk\in\mathbb{R}^{d_h}\). Up to the usual attention scaling, their logit is
\(\mathcal{K}_\Delta(\bq,\bk)=\bq^\top R_\Delta\bk\), where \(R_\Delta\) is
block diagonal with \(2\times2\) RoPE rotations. Hence the logit is a
position-dependent sum of block terms
\(\bq^{(i)\top}R(\Delta\theta_i)\bk^{(i)}\), for \(i=1,\ldots,d_h/2\), with
\(\theta_i\) the frequency of block \(i\). A cached key is therefore not used
through a flat-vector interface. Key-cache quantization should allocate
precision across RoPE blocks according to their logit impact, rather than
optimize a single flat-vector reconstruction objective over the whole key
head.

This block-wise view changes where bits should be spent. Uniform allocation
within a key head is natural only if RoPE blocks have comparable influence on
future logits. Empirically, block-energy profiles can be sharply uneven: a
few frequency blocks carry most of the query-key signal, making future logits
more sensitive to quantization error in those blocks than to comparable error
elsewhere. RoPE-agnostic uniform allocation therefore spends the same precision
on blocks with very different logit sensitivity, potentially over-protecting
low-impact blocks and under-protecting high-impact ones. Figure~\ref{fig:intro-why-rope-allocation} illustrates this allocation gap:
one KV head from Qwen3-8B has a sharply non-uniform block-energy profile, and
under the same average bit budget \(\bar b=3\), Block-GTQ shifts precision
toward high-energy blocks instead of using the same bit width for every block.

\begin{figure}[t]
\centering
\includegraphics[
  width=\linewidth,
  trim=0 55pt 0 0,
  clip
]{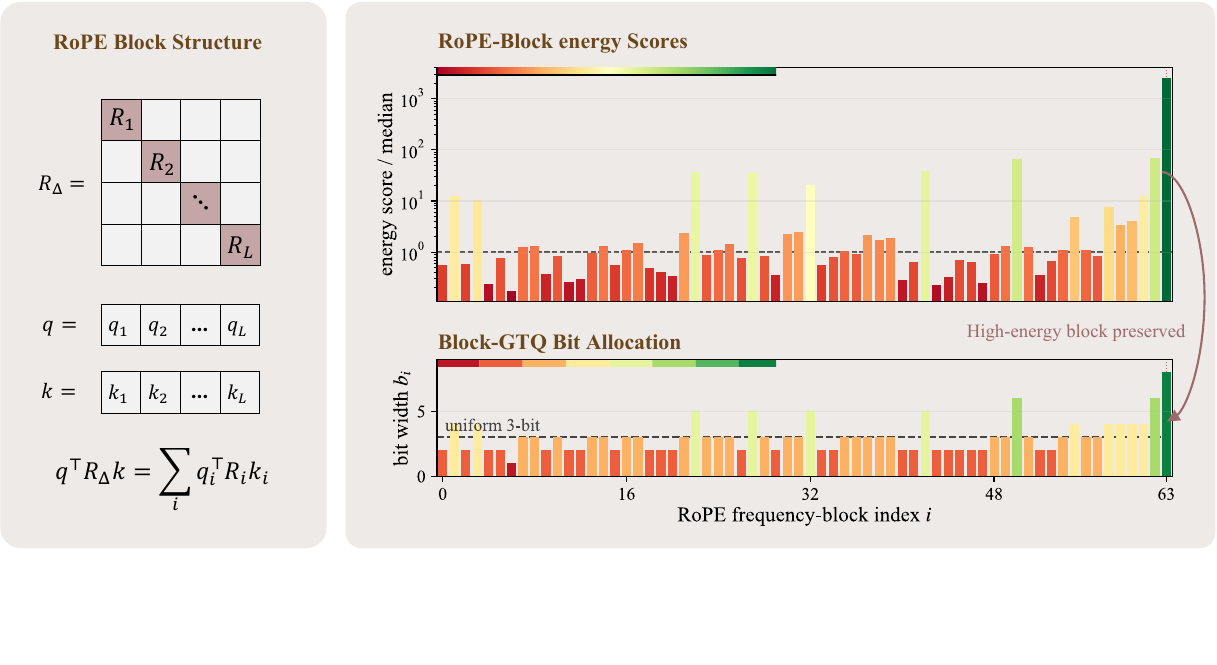}
\vspace{-5mm}
\caption{\textbf{RoPE-block allocation.}
\textbf{(a)} The RoPE attention logit \(\bq^\top R_\Delta \bk\) decomposes into a sum over two-dimensional frequency blocks.
\textbf{(b)} Per-block energy scores for one Qwen3-8B KV head (layer~10, head~4); scores are median-normalized for display and span orders of magnitude.
\textbf{(c)} Under the same average bit width
\(\bar b=3\) (dashed line), Block-GTQ reallocates bits from low-energy blocks to high-energy blocks instead of using a uniform 3-bit width.}
\label{fig:intro-why-rope-allocation}
\vspace{-6mm}
\end{figure}

We propose \emph{Block-GTQ}, a lightweight RoPE-aware allocator that spends
key-cache precision where future logits are most sensitive. For each layer and
KV head, Block-GTQ computes a label-free RoPE-block energy score from Q/K activations,
combines it with the TurboQuant-MSE (TQ-MSE)~\cite{zandieh2025turboquant}
\(4^{-b}\) squared-error rate law, and greedily assigns integer bit widths
under a fixed average-bit budget. Blocks assigned the same bit width are
grouped and encoded by the original TQ-MSE local quantizer. Since values do
not enter the RoPE key-logit computation, V is encoded with
uniform-allocation TQ-MSE. All RoPE blocks are still stored; Block-GTQ changes only their bit widths.

Our contributions are:\vspace{-1mm}
\begin{enumerate}[leftmargin=*, nosep]
\item We formulate key-cache compression for RoPE models as a logit-preservation
problem over two-dimensional frequency blocks, rather than a flat-vector
reconstruction problem.

\item We derive a RoPE-block integer bit allocator that combines a label-free
Q/K energy score with the TQ-MSE \(4^{-b}\) error law, and
reuse the TQ-MSE encoder for same-bit-width block groups.

\item We validate the mechanism from RoPE-logit fidelity to downstream
long-context retrieval, understanding, and reasoning tasks. On a diverse ten-model diagnostic panel, at both $2$ and $3$ b/dim K-only,
Block-GTQ cuts per-layer RoPE-logit MAE by $32$--$80\%$ across models and wins all $367/367$ layer comparisons at each budget against uniform TQ-MSE. At the K2V2
budget on Llama-3.1-8B-Instruct, it raises the six-task NIAH average from
\(70.6\) to \(97.4\), and the eight-task LongBench-EN average from \(36.87\)
to \(53.31\), relative to uniform-allocation TQ-MSE. On AIME 2024/2025 with
DeepSeek-R1-Distill-Qwen-7B, and without relying on an fp16 recent-key buffer,
Block-GTQ at K3V2 scores \(51.7/37.5\), close to fp16's \(54.2/37.9\),
whereas uniform-allocation TQ-MSE collapses to \(0.0/0.0\).

\item We implement a packed-cache serving path for compressed K/V codes and
evaluate it with K3V3 Block-GTQ on Qwen2.5-3B-Instruct. Compared with fp16
FlashAttention2 using an uncompressed KV cache, at 128K tokens the packed path
compresses the KV cache by \(3.24\times\), reduces peak memory from
\(56.31\) GB to \(19.85\) GB, and lowers single-request decode latency from
\(70.96\) ms to \(52.95\) ms. At 256K and 512K tokens, the fp16 baseline runs
out of memory on the same H800, while the packed path remains
feasible with \(33.42\) GB and \(60.56\) GB peak memory.
\end{enumerate}

\vspace{-2mm}
\section{RoPE-Structured Key-Cache Error}
\label{sec:background}
\vspace{-1mm}

\vspace{-2mm}
\subsection{RoPE Block Notation}
\label{sec:background-rope}
\vspace{-2mm}
For one query/key head, let \(\bq,\bk\in\mathbb{R}^{d_h}\) be split into
\(L=d_h/2\) two-dimensional RoPE blocks
\(\bq^{(i)},\bk^{(i)}\in\mathbb{R}^2\), with block frequencies \(\theta_i\). At relative offset \(\Delta\),
\(R_\Delta=\mathrm{diag}(R(\Delta\theta_1),\ldots,R(\Delta\theta_L))\), so
the query-key logit is
\(\mathcal K_\Delta(\bq,\bk)=\bq^\top R_\Delta\bk
=\sum_{i=1}^{L}\bq^{(i)\top}R(\Delta\theta_i)\bk^{(i)}\).

Let \(\hat\bk\) be a decoded key in the same coordinate system as \(\bk\),
and define \(\be_\bk^{(i)}=\bk^{(i)}-\hat\bk^{(i)}\). The induced logit error
is \(\sum_i \bq^{(i)\top}R(\Delta\theta_i)\be_\bk^{(i)}\), and Cauchy--Schwarz
together with rotation orthogonality gives
\(
|\mathcal K_\Delta(\bq,\bk)-\mathcal K_\Delta(\bq,\hat\bk)|
\le
\sum_{i=1}^{L}
\|\bq^{(i)}\|_2\|\be_\bk^{(i)}\|_2
\).
Each block thus contributes independently to the bound, with no cross-block
terms. Since each
RoPE block is an orthogonal \(2\times2\) rotation, RoPE preserves the
\(\ell_2\) norm of every query/key block, so norm-based block statistics are
RoPE-invariant---we can therefore compute the energy score in pre-RoPE
coordinates.

\vspace{-3mm}
\subsection{TQ-MSE Rate Law}
\label{sec:background-turboquant}
\vspace{-2mm}
Block-GTQ reuses the local encoder from TQ-MSE~\cite{zandieh2025turboquant}:
after normalizing a nonzero vector \(\bx\), TQ-MSE applies a shared orthogonal
rotation, scalar-quantizes the rotated coordinates, and restores the radius.
For allocation, only its rate law is needed: at \(b\) bits per coordinate,
the decoded vector \(\hat\bx\) satisfies
\(\E\!\|\bx-\hat\bx\|_2^2\le \|\bx\|_2^2\,C_{\mathrm{TQ}}\,4^{-b}\) with
\(C_{\mathrm{TQ}}=\sqrt{3}\pi/2\). Each additional bit quarters the local
MSE bound; Block-GTQ uses this \(4^{-b}\) rate law to allocate bits across
RoPE blocks.

\vspace{-3mm}
\subsection{Key-Cache Logit Error}
\label{sec:problem}
\vspace{-2mm}

Let \(\bq_n^{\mathrm R}=R_n\bq_n\) and \(\bk_m^{\mathrm R}=R_m\bk_m\) denote
the post-RoPE query and key at positions \(n,m\), with \(R_t\) the absolute
RoPE rotation at position \(t\). If the deployed cache decodes the post-RoPE
key as \(\hat{\bk}_m^{\mathrm R}\), the key-cache logit error
\(\mathcal E_{n,m}:=|(\bq_n^{\mathrm R})^\top\bk_m^{\mathrm R}-(\bq_n^{\mathrm R})^\top\hat{\bk}_m^{\mathrm R}|\)
is, up to the usual \(1/\sqrt{d_h}\) scaling, the logit perturbation induced
by key-cache compression. We focus on keys because queries are computed on the fly and values are
mixed only after softmax weights are computed.

Although deployment stores post-RoPE keys, we analyze \(\mathcal E_{n,m}\) in
pre-RoPE coordinates with relative offset \(\Delta=m-n\); the per-block bound
from Subsection~\ref{sec:background-rope} applies: block \(i\)'s contribution
depends only on the query norm and key-error norm in the same block
(Appendices~\ref{app:theory-coordinate-change} and~\ref{app:theory-block-bound-proof}).

This block-wise structure motivates a per-block bit allocation: for each
layer and KV head, choose integer bit widths
\(\mathbf b=(b_1,\ldots,b_L)\) with $b_{\min}\le b_i\le b_{\max}$ and
$\sum_i b_i=B$, while keeping every RoPE block cached. In expectation, block
\(i\)'s contribution to the bound is the ideal block weight
\(s_i^\star:=\E[\|\bq^{(i)}\|_2\,\|\bk^{(i)}\|_2]\) times the local
quantizer's bit-dependent rate (Appendix~\ref{app:theory-ideal-weight}).
The optimal allocation therefore gives more bits to blocks with higher
\(s_i^\star\).

\vspace{-2mm}
\section{Block-GTQ: RoPE-Block Bit Allocation}
\label{sec:core-algorithm}
\vspace{-2mm}

\vspace{-1mm}
\subsection{Block-Energy Score}
\label{sec:scoring}
\vspace{-1mm}

Directly estimating $s_i^\star$ requires paired query-key products, which
can be noisy on a short calibration prefix. Block-GTQ instead uses an
AM-GM-based energy score that depends only on marginal Q/K second moments:
\(s_i:=\frac12\E[\|\bq^{(i)}\|_2^2+\|\bk^{(i)}\|_2^2]\).
By AM-GM, $s_i^\star\le s_i$ in expectation: $s_i$ may overestimate
$s_i^\star$ but never underestimates it.

Instantiating $s_i$ for layer $\ell$ and KV head $h$, the empirical energy
score is
\vspace{-1mm}
\begin{equation}
\label{eq:scoring-layer-head-energy}
s_{\ell,h,i}
=
\frac12\left(
\E_{t,g\in G(h)}
\left[\|\bq_{\ell,g,t}^{(i)}\|_2^2\right]
+
\E_t
\left[\|\bk_{\ell,h,t}^{(i)}\|_2^2\right]
\right).
\end{equation}
\vspace{-4mm}

Here $G(h)$ is the set of query heads that read KV head $h$, and expectations
are averaged over a short unlabeled calibration prefix;
Appendix~\ref{app:calib-gqa} expands these expectations into explicit sums.

\vspace{-3mm}
\subsection{Budgeted RoPE-Block Allocation}
\label{sec:method-block-gtq}
\vspace{-1mm}

For each layer $\ell$ and KV head $h$ with head-level integer budget
$B\in[Lb_{\min},Lb_{\max}]$, Block-GTQ chooses an integer bit schedule
$\mathbf b=(b_1,\ldots,b_L)$ that minimizes
$J_{\ell,h}(\mathbf b) = \sum_{i=1}^{L} s_{\ell,h,i}\, 4^{-b_i}$,
subject to $b_{\min}\le b_i\le b_{\max}$ and $\sum_i b_i=B$.

\begin{algorithm}[!h]
\caption{Block-GTQ: greedy bit allocation per layer and KV head}
\label{alg:blockgtq-greedy}
\begin{algorithmic}[1]
\REQUIRE Scores $s_1,\ldots,s_L$, feasible integer budget $B$, bounds $b_{\min},b_{\max}$
\ENSURE Per-block bit widths $b_1,\ldots,b_L$
\STATE $b_i\gets b_{\min}$ for all $i=1,\ldots,L$
\STATE $B_{\mathrm{extra}}\gets B-Lb_{\min}$
\STATE Initialize a max-priority queue with key $\Delta_i=\tfrac34s_i4^{-b_i}$ for all $i$ with $b_i<b_{\max}$
\WHILE{$B_{\mathrm{extra}}>0$ and the queue is nonempty}
  \STATE Pop $i^\star$ with largest $\Delta_i$
  \STATE $b_{i^\star}\gets b_{i^\star}+1$, \quad $B_{\mathrm{extra}}\gets B_{\mathrm{extra}}-1$
  \IF{$b_{i^\star}<b_{\max}$}
    \STATE Push $i^\star$ back with updated key $\Delta_{i^\star}=\tfrac34s_{i^\star}4^{-b_{i^\star}}$
  \ENDIF
\ENDWHILE
\RETURN $b_1,\ldots,b_L$
\end{algorithmic}
\vspace{-1mm}
\end{algorithm}

To solve this objective, we use greedy bit allocation guided by the
marginal reduction. Adding one bit to block $i$ at current width $b_i$
reduces $J_{\ell,h}$ by
\(\Delta_i(b_i)=s_{\ell,h,i}4^{-b_i}-s_{\ell,h,i}4^{-(b_i+1)}
=\tfrac34 s_{\ell,h,i}4^{-b_i}\):
high-score blocks ask for bits first, but each bit they receive divides
their next marginal gain by four. Algorithm~\ref{alg:blockgtq-greedy}
initializes every block at $b_{\min}$ and repeatedly assigns the next bit
to the block with the largest current $\Delta_i$ until the budget is spent. In fact, greedy is optimal for this objective:

\begin{theorem}[Greedy optimality for the allocation objective]
\label{thm:theory-greedy}
For positive scores $s_i$ and feasible integer budget
$B\in[Lb_{\min},Lb_{\max}]$, Algorithm~\ref{alg:blockgtq-greedy} minimizes
$J(\mathbf b)=\sum_i s_i4^{-b_i}$ over all integer allocations satisfying
$b_{\min}\le b_i\le b_{\max}$ and $\sum_i b_i=B$.
\end{theorem}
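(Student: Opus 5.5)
The idea is to recast the allocation as choosing $B_{\mathrm{extra}}=B-Lb_{\min}$ items from a collection of per-block ``increments'' and then use convexity. For each block $i$ and each level $t=b_{\min},\ldots,b_{\max}-1$, let $g_{i,t}:=\tfrac34 s_i4^{-t}$ be the reduction in $J$ obtained by raising $b_i$ from $t$ to $t+1$. Any feasible $\mathbf b$ then satisfies
\[
J(\mathbf b)=\sum_i s_i4^{-b_{\min}}-\sum_{i=1}^{L}\sum_{t=b_{\min}}^{b_i-1} g_{i,t}.
\]
The first sum does not depend on $\mathbf b$. Minimizing $J$ is therefore the same as maximizing the total gain $G(\mathbf b)=\sum_i\sum_{t<b_i} g_{i,t}$, where each block contributes a \emph{prefix} of its gain sequence and the prefix lengths add up to $B_{\mathrm{extra}}$.

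Next I use that each block's gains strictly decrease: $g_{i,t+1}=g_{i,t}/4<g_{i,t}$ because $s_i>0$. This is the discrete convexity of $b\mapsto s_i4^{-b}$. Consider the multiset of all $L(b_{\max}-b_{\min})$ gains. Any choice of prefixes picks exactly $B_{\mathrm{extra}}$ of them, so $G(\mathbf b)$ is at most the sum of the $B_{\mathrm{extra}}$ largest gains overall. The bound is attained. Because gains decrease within each block, the top-$B_{\mathrm{extra}}$ set can be chosen downward-closed in $t$ for every block, since whenever $g_{i,t}$ is selected, every $g_{i,t'}$ with $t'<t$ is larger. A downward-closed set is a union of prefixes and hence corresponds to a feasible allocation. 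Ties need a small amount of care. I will fix a tie-break order that prefers smaller $t$ within each block, so that the top set stays downward-closed.

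It remains to show that Algorithm~\ref{alg:blockgtq-greedy} outputs such a top-$B_{\mathrm{extra}}$ set. At every iteration the queue holds, for each unsaturated block $i$, exactly its next unused gain $g_{i,b_i}$. The structure of the queue gives this invariant directly: block $i$ is pushed with key $\tfrac34 s_i4^{-b_i}$ for its current width, and it is removed once $b_i=b_{\max}$. I then argue by induction on the iteration count $k$ that the first $k$ popped gains are $k$ largest gains of the whole multiset. Every unpopped gain $g_{j,t}$ satisfies $g_{j,t}\le g_{j,b_j}$, which is the current queue entry for block $j$, so the maximum over the queue equals the maximum over all remaining gains. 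The loop runs exactly $B_{\mathrm{extra}}$ times. It cannot stop early, because $B\le Lb_{\max}$ means the queue is nonempty whenever $B_{\mathrm{extra}}>0$. Hence the greedy achieves the upper bound on $G$, and so it minimizes $J$.

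The main obstacle I expect is stating the invariant ``queue maximum equals global maximum of remaining gains'' cleanly, including ties among equal keys in different blocks. I will handle ties by noting that the upper-bound argument uses only sums of the top $B_{\mathrm{extra}}$ values, which do not depend on how ties are broken, so any tie-breaking in the priority queue gives an optimal value. If needed, I can give an alternative exchange argument: if an optimal $\mathbf b^\ast$ differs from the greedy $\mathbf b$, there are blocks $i,j$ with $b_i>b^\ast_i$ and $b_j<b^\ast_j$. Moving one bit from $j$ to $i$ in $\mathbf b^\ast$ changes $J$ by $g_{j,b^\ast_j-1}-g_{i,b^\ast_i}\le 0$, using greedy's pop order together with monotonicity. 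Iterating this transforms $\mathbf b^\ast$ into $\mathbf b$ without increasing $J$.
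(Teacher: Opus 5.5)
Your proof is correct. After the shared reformulation, it takes a different route from the paper's.

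Both arguments recast the problem as selecting $K=B-Lb_{\min}$ marginal gains $\tfrac34 s_i4^{-t}$ under a per-block prefix constraint. Both rest on the fact that these gains decrease within each block. The paper then gives a classical exchange proof. By induction on $t$ it maintains an optimal gain set $O$ containing the first $t$ greedy picks $P_t$. If the next greedy gain $g$ (from block $a$) is missing from $O$, it inserts $g$ and deletes the terminal gain of a block $j$ that has gains in $O\setminus P_t$. That terminal gain is at most the first such gain $h_j$, which is at most $g$ because $h_j$ was available to greedy.

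You instead drop the prefix constraint and bound every feasible total gain by the sum of the $K$ largest entries of the full multiset of gains. You then show the priority queue attains this bound. Each block's queue key dominates all of that block's unpopped gains, so every pop is a global maximum of what remains, and the popped set is a top-$K$ set.

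Your route gives an explicit certificate for the optimal value, independent of tie-breaking and of any particular optimal solution. It shows the prefix constraint never binds, and that any rule selecting the $K$ largest gains (a full sort, a threshold) is equally optimal. It also avoids a step the paper leaves implicit: that a suitable $j\neq a$ exists. This holds because prefix-feasibility of $O$, together with $g\notin O$, places all of $O\setminus P_t$ outside block $a$. The paper's local exchange, in turn, never needs a global bound. It is the template that carries over to feasibility structures where dropping the constraints would give an unattainable bound.

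Two small remarks. Your tie-breaking provision for a downward-closed top set is unnecessary: strict decrease of $g_{i,t}$ in $t$ already forces every top-$K$ set to be prefix-closed. Your closing bit-moving exchange is also a valid proof on its own.
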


The proof is in Appendix~\ref{app:theory-greedy-proof}.

The greedy output is a bit schedule, not yet a physical cache layout.
Block-GTQ realizes this schedule by grouping RoPE blocks with the same assigned
bit width. For each nonempty group
$\mathcal G_b^{(\ell,h)}=\{i:b_{\ell,h,i}=b\}$, we concatenate the
corresponding post-RoPE key blocks and encode the resulting subvector with one
TQ-MSE encoder at $b$ bits/dim. This keeps the allocation decision at
RoPE-block granularity while avoiding a separate tiny quantizer for every
two-dimensional block. Uniform TQ-MSE is the special case in which all blocks
belong to one same-rate group. 
\vspace{-2mm}
\section{Serving Block-GTQ from a Packed Cache}
\label{sec:system-architecture}
\vspace{-2mm}

\begin{figure}[!h]
\centering
\includegraphics[width=\linewidth]{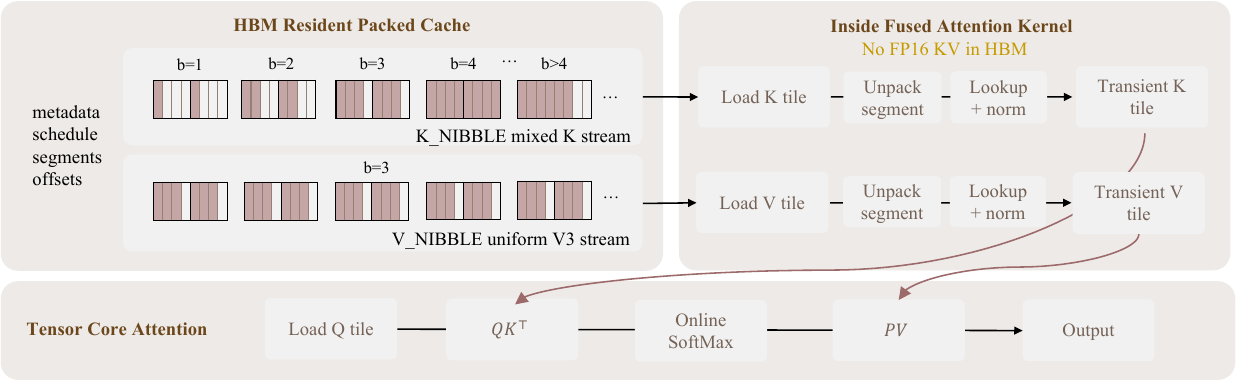}
\vspace{-2mm}
\caption{\textbf{Packed-cache serving path.} Persistent HBM stores packed
K/V code streams plus norms and metadata. The fused attention kernel decodes
only the current tile into kernel-local temporaries and consumes them directly
in QK and PV products, avoiding a resident decoded fp16 KV cache.}
\label{fig:system-packed-kernel}
\vspace{-2mm}
\end{figure}

At inference time, we serve Block-GTQ directly from a packed cache. The
cache update writes packed K/V code streams, norms, and static layout
metadata into HBM. The fused attention kernel loads only the current time
tile, decodes it into kernel-local temporaries, and consumes them in QK
and PV products; a full fp16 KV cache is never materialized in HBM.
Figure~\ref{fig:system-packed-kernel} shows this path. The K stream follows the mixed-rate Block-GTQ schedule, with
low-bit groups stored in nibble containers and higher-bit groups stored as
bytes; the V stream is also packed, with uniform-allocation TQ-MSE.

This layout turns the packed cache into a memory-bandwidth win.
Single-token decoding is memory-bandwidth bound---one query attends to all
$T$ cached keys, so each step is governed by streaming the KV cache from
HBM. The fused kernel unpacks each tile (nibble
extraction for $\le 4$-bit groups, byte loads for higher-bit $K$ groups),
dequantizes through a shared fp16 codebook small enough to stay resident
in the L1 cache, rescales by the per-group $K$ and per-token $V$ norms,
and forms $QK^{\top}$ and $PV$ as two fp16-input, fp32-accumulate
tensor-core matmuls under a fully fp32 online softmax; long contexts are
split along the key axis and recombined
with an exact log-sum-exp merge. The dequantized $K$/$V$ stay in registers
as tensor-core operands and are never written back to HBM, so the per-step
HBM traffic is only the packed codes and norms---about $157\,$B per token
and KV head at K3V3 versus $512\,$B for an \mbox{fp16} pair
(Table~\ref{tab:app-deployment-footprint}, $\sim\!3.26\times$). The
in-kernel unpack adds a fixed per-step cost that \mbox{fp16}
FlashAttention-2 does not pay, so the packed path is marginally slower at
short context and overtakes \mbox{fp16} only once the sequence is long
enough for KV bandwidth to dominate---crossing over at $T{=}128$K, where
it decodes $1.34\times$ faster at $\!3.26\times$ less KV memory
(Table~\ref{tab:app-deployment-tqmse-reference}). Per-step launch overhead
is removed by capturing the cache update as a CUDA graph, and the matching
$Q$-side rotation from TQ-MSE is a small $QR^{\top}$ matmul after the
$q$-projection that can be folded into the $q$-projection weights offline.

\paragraph{Prefill.}
We populate the cache one transformer layer at a time, so each layer
runs in a single full-length pass instead of the $O(T)$ steps of an
autoregressive fill. The QKV/MLP projections and rotary embedding execute
as full-$T$ matrix multiplications, and two batched Triton kernels---one
for $K$, one for $V$---each launch once per layer to quantize every
head's keys and values for the whole prompt, packing the codes into the
cache's nibble/mixed-byte layout and writing the code streams and norms
straight into the persistent buffers in the same layout the decode kernel
reads. Prefill attention is then a FlashAttention-2--style kernel over the
packed cache: each program owns a tile of queries, streams the compressed
$K$/$V$ tiles, decodes them into kernel-local temporaries for a
per-segment (per-rate-group) $QK^{\top}$ accumulation and a full-width
$PV$ product, and accumulates under a causal mask with an online
softmax---without materializing an \mbox{fp16} attention matrix or KV
cache. Constant launches per layer (vs $O(T)$) remove the dispatch
overhead that otherwise dominates long-context prefill, making
hundred-thousand-token prefill feasible on a single H800.


\section{Related Work}
\label{sec:related}

\vspace{-1mm}
\paragraph{Long-context inference and KV-cache memory.}
Autoregressive long-context decoding is often limited by repeatedly reading a
KV cache that grows with sequence length~\cite{pope2022efficiently}. Serving
systems such as PagedAttention and CacheGen manage and reuse this state
more carefully~\cite{kwon2023efficient,liu2024cachegen}, while
context-extension methods such as YaRN and LongLoRA change how models reach
longer windows~\cite{peng2024yarn,chen2024longlora}.

\vspace{-3mm}
\paragraph{KV-cache quantization.}
Most KV-cache quantizers optimize reconstruction or outlier objectives at
channel, token, group, or vector granularity. KIVI, KVQuant, ZipCache, Coupled
Quantization, MiKV, MoQAE, and
AQUA-KV~\cite{liu2024kivi,hooper2024kvquant,he2024zipcache,zhang2024coupled,yang2024notoken,tao2025moqae,malinovskii2025aqua}
pair low-bit KV storage with outlier or mixed-precision adjustments; KVSink,
Outlier Tokens Tracing, and SQuat target sink tokens, outliers, and
query-subspace structure~\cite{su2025kvsink,su2025accurate,wang2025squat}.
PolarQuant and TurboQuant~\cite{han2025polarquant,zandieh2025turboquant}
provide local vector-quantization primitives, while GEAR adds low-rank and
sparse error recovery~\cite{kang2024gear}. Block-GTQ uses TurboQuant-MSE as its
local primitive but greedily allocates bits per RoPE frequency block via a
block-energy score, since the key-side attention logit decomposes into block terms.

\vspace{-3mm}
\paragraph{RoPE-aware KV-cache quantization.}
Several methods exploit RoPE structure when reducing KV-cache cost. KVQuant
and RotateKV both operate before RoPE---the former quantizes keys, the latter
applies outlier-aware rotations~\cite{hooper2024kvquant,su2025rotatekv}; CommVQ
learns codebooks that commute with RoPE~\cite{li2025commvq}. EliteKV combines
head-specific RoPE-frequency selection with joint low-rank projection~\cite{zhou2025elitekv};
RAP prunes RoPE-aligned pairs~\cite{xin2026rap}; TriAttention scores key
importance via pre-RoPE Q/K geometry and trigonometric distance~\cite{mao2026triattention}.
Block-GTQ instead greedily allocates precision per RoPE block using a
block-energy score derived from the RoPE logit-error bound; no block is
dropped.

\vspace{-3mm}
\paragraph{Non-uniform precision allocation.}
The closest line assigns precision non-uniformly: PM-KVQ at per-layer granularity (shared by K and V)
~\cite{liu2025pmkvq}; MixKVQ (query-aware) and Kitty at key-channel
granularity~\cite{zhang2025mixkvq,xia2025kitty}; and Ada-KV via head-wise
eviction budgets~\cite{feng2025adakv}.
Block-GTQ differs in both unit and score: its greedy allocator assigns bits
to RoPE frequency blocks inside each head from a block-energy score derived
from the RoPE logit-error bound.

\vspace{-3mm}
\paragraph{Token retention and low-rank compression.}
Another family reduces cache cost by keeping, merging, or sampling cached
tensors. Attention Sinks, H$_2$O, Scissorhands, FastGen, SnapKV, PyramidKV,
MagicPIG, and SubGen retain or sample tokens based on sink behavior, heavy
hitters, persistence, profiled head patterns, attention scores, pyramidal
layer budgets, or clustering~\cite{xiao2024efficient,zhang2023h2o,liu2023scissorhands,ge2024fastgen,li2024snapkv,cai2024pyramidkv,chen2025magicpig,zandieh2024subgen}.
Low-rank and hybrid methods (Palu, MiniCache, GEAR for KV cache; UniQL for
edge LLMs) reduce cache dimensionality, merge across depth, or recover
quantization error~\cite{chang2025palu,liu2024minicache,kang2024gear,chiang2026uniql}.
These directions are orthogonal to Block-GTQ's per-RoPE-block bit allocation.

\vspace{-3mm}
\section{Experiments}
\label{sec:experiments}
\vspace{-2mm}

\vspace{-1mm}
\subsection{Allocation and Attention Diagnostics}
\label{sec:exp-bit-alloc}
\label{sec:exp-attn-diag}
\vspace{-2mm}

We empirically verify Block-GTQ on a ten-model panel, inspecting (i) the
bit allocation it produces, (ii) the resulting RoPE-logit error, and
(iii) the resulting attention distributions. The panel covers GQA backbones
from Qwen, Llama, DeepSeek, Mistral, and GLM plus an MLA-based DeepSeek
model. All experiments in this subsection quantize $K$ only; $V$ remains in
fp16 to isolate the effect of K-cache quantization. Details about the
panel and experimental setup are in Appendix~\ref{app:attention-model-panel}.

We first inspect the bit allocation Block-GTQ produces at the $3$ b/dim
budget. Figure~\ref{fig:exp-bit-alloc-fingerprint} shows that every
architecture has a non-uniform RoPE-block energy profile, which Block-GTQ
translates into a non-uniform allocation. Aggregate distributions and
per-layer heterogeneity across the panel are in
Appendix~\ref{app:bit-alloc-aggregate}. We then measure per-layer RoPE-logit error.
Table~\ref{tab:exp-perlayer-kernel-error} shows that Block-GTQ reduces
mean RoPE-logit MAE versus uniform TQ-MSE on all $10$ models and wins
$367/367$ ($100\%$) layer comparisons; the same $367/367$ pattern holds
at the tighter $2$ b/dim budget
(Table~\ref{tab:app-perlayer-kernel-error-2b}).
Definition and protocol are in Appendix~\ref{app:perlayer-kernel-error}. Finally, we test whether these per-layer reductions propagate to the attention distribution itself. Figure~\ref{fig:exp-attn-dual-kl} shows that, without a recent-token buffer,
Block-GTQ achieves both the lowest mean softmax KL versus fp16 and the highest
top-10 attended-token overlap at every budget. Setup, metrics, and additional results are in
Appendix~\ref{app:attention-metrics}. 

\vspace{-2mm}
\begin{table*}[!ht]
  \centering
  \caption{Per-layer RoPE-logit error at the $3$ b/dim budget, K-only.
  Values are mean RoPE-logit MAE across model layers; lower is better.
  $\Delta$ is the relative reduction versus TQ-MSE; ``Wins'' counts layers
  where Block-GTQ beats uniform TQ-MSE. Definition and protocol in
  Appendix~\ref{app:perlayer-kernel-error}.}
  \label{tab:exp-perlayer-kernel-error}
  \small
  \setlength{\tabcolsep}{4pt}
  
  \scriptsize{\begin{tabular}{@{}c@{\hspace{1.2em}}c@{}}
  \begin{tabular}{l r r r c}
  \toprule
  Model & TQ-MSE & Block-GTQ & $\Delta$ & Wins \\
  \midrule
  Qwen2.5-3B       & 6.43  & \textbf{3.23}  & $+49.9\%$ & 36/36 \\
  Qwen2.5-14B      & 4.14  & \textbf{2.61}  & $+37.1\%$ & 48/48 \\
  Qwen3-8B         & 5.81  & \textbf{2.96}  & $+49.0\%$ & 36/36 \\
  Qwen3-30B-A3B    & 6.76  & \textbf{3.00}  & $+55.6\%$ & 48/48 \\
  Llama-3.1-8B     & 3.80  & \textbf{2.55}  & $+32.7\%$ & 32/32 \\
  \bottomrule
  \end{tabular}
  &
  \begin{tabular}{l r r r c}
  \toprule
  Model & TQ-MSE & Block-GTQ & $\Delta$ & Wins \\
  \midrule
  DS-R1-Llama-8B   & 3.44  & \textbf{2.33}  & $+32.2\%$ & 32/32 \\
  DS-R1-Qwen-7B    & 11.44 & \textbf{2.40}  & $+79.1\%$ & 28/28 \\
  Mistral-Nemo-12B & 3.46  & \textbf{2.28}  & $+34.2\%$ & 40/40 \\
  GLM-4-9B         & 7.30  & \textbf{4.51}  & $+38.2\%$ & 40/40 \\
  DS-V2-Lite       & 6.01  & \textbf{3.87}  & $+35.5\%$ & 27/27 \\
  \bottomrule
  \end{tabular}
  \end{tabular}}
  \vspace{-2mm}

  \end{table*}
\vspace{-3mm}

\vspace{-2mm}
\begin{figure}[!ht]
\centering
\includegraphics[width=0.95\linewidth]{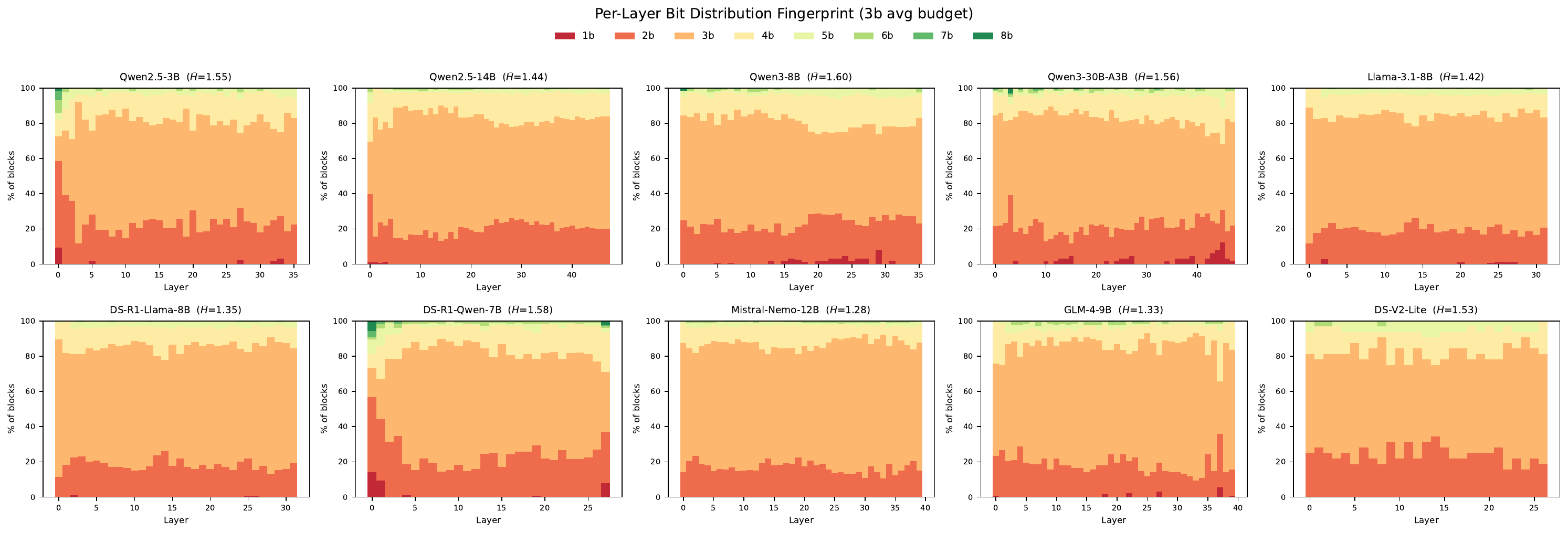}
\caption{\textbf{Allocator fingerprint at the $3$ b/dim budget.} One
subplot per model; each vertical slice is one layer, with stacked color
bands giving its bit-width distribution ($1$b red $\rightarrow$ $8$b green).}
\label{fig:exp-bit-alloc-fingerprint}
\end{figure}
\vspace{-3mm}

\vspace{-3mm}
\begin{figure}[!ht]
\centering
\includegraphics[width=\linewidth]{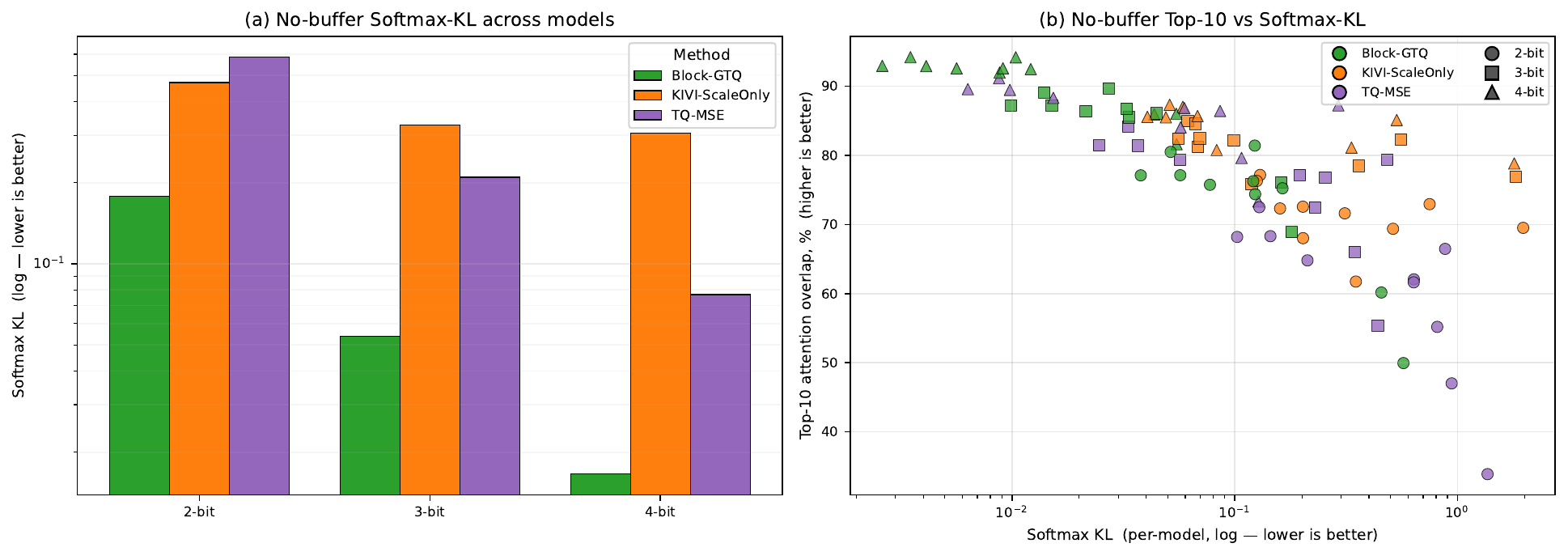}
\caption{\textbf{Mean softmax KL and Top-10 attended-token overlap across models.}
K-only quantization ($V$ stays fp16).
\textbf{(a)} Mean softmax KL versus fp16 per method, averaged over the
ten-model panel at $2$, $3$, and $4$ b/dim budgets.
\textbf{(b)} Top-10 attended-token overlap versus softmax KL, one marker per
model (color: method, shape: bit-rate); the upper-left corner is best.
See Appendix~\ref{para:kivi-scaleonly-note} for the KIVI no-buffer setting.}
\label{fig:exp-attn-dual-kl}
\end{figure}
\vspace{-3mm}

\FloatBarrier

\vspace{-4mm}
\subsection{Calibration Robustness Ablation}
\label{sec:exp-calibration-ablation}
\vspace{-1mm}

We ablate calibration along two axes (Table~\ref{tab:exp-calib-knobs}):
(a) \emph{length}, sweeping
$N_{\mathrm{cal}}\in\{64,\ldots,2048\}$ tokens from WikiText-2
test; and (b) \emph{corpus}, drawing $2048$ tokens each from
WikiText-2, PG19, C4, and code. We further extend the length axis
to a continuous metric
(Table~\ref{tab:exp-calib-budget-ppl}): for each cell we draw
three independent calibration prefixes from WikiText-2 train
(offsets $0$/$10$k/$20$k) and report sliding-window PPL on the
full WikiText-2 test set as mean $\pm$ std. Both show K3V3 is robustly
less sensitive than K2V2: at K3V3 the six NIAH subtasks stay within
$1.07$ pp and PPL within $\pm 1\sigma$ across seeds, while at K2V2
NIAH swings $1.57$--$4.09$ pp and PPL by several $\sigma$. This reflects the $4^{-b}$ rate law in
Block-GTQ's allocator objective $\sum_i s_i \cdot 4^{-b_i}$: a
misplaced bit at $b{=}3$ (K3V3) costs roughly $4\times$ less than at
$b{=}2$ (K2V2), so the same calibration noise has a proportionally
smaller downstream effect. Details and more results are in
Appendix~\ref{app:calibration}.

\vspace{-2mm}
\begin{table}[!ht]
  \centering
  \caption{Calibration ablations on Llama-3.1-8B-Instruct along
    two axes: (a) calibration length and (b) calibration corpus.
    NIAH Overall (\%) is the unweighted mean of six NIAH
    subtasks; higher is better.}
  \label{tab:exp-calib-knobs}
  \begin{subtable}[t]{0.49\textwidth}
    \centering\scriptsize
    \setlength{\tabcolsep}{3pt}
    \caption{Calibration length: $\Delta$ vs $N_{\mathrm{cal}}=2048$ baseline.}
    \label{tab:exp-calib-length}
    \begin{tabular*}{\linewidth}{@{\extracolsep{\fill}}lcccc}
      \toprule
      & \multicolumn{2}{c}{K2V2} & \multicolumn{2}{c}{K3V3} \\
      \cmidrule(lr){2-3}\cmidrule(lr){4-5}
      $N_{\mathrm{cal}}$ & Overall & $\Delta$ & Overall & $\Delta$ \\
      \midrule
      64           & 95.68 & $-1.68$ & 98.46 & $+0.06$ \\
      128          & 94.28 & $-3.08$ & 98.63 & $+0.23$ \\
      256          & 94.08 & $-3.28$ & 97.33 & $-1.07$ \\
      512          & 95.79 & $-1.57$ & 98.01 & $-0.39$ \\
      1024         & 93.27 & $-4.09$ & 98.29 & $-0.11$ \\
      \textbf{2048} & \textbf{97.36} & (base) & \textbf{98.40} & (base) \\
      \bottomrule
    \end{tabular*}
  \end{subtable}
  \hfill
  \begin{subtable}[t]{0.49\textwidth}
    \centering\scriptsize
    \setlength{\tabcolsep}{3pt}
    \renewcommand{\arraystretch}{1.1}
    \caption{Calibration corpus: $\Delta$ vs WikiText-2 baseline. Each row
    draws $2048$ tokens from the named corpus.}
    \label{tab:exp-calib-corpus}
    \begin{tabular*}{\linewidth}{@{\extracolsep{\fill}}lcccc}
      \toprule
      & \multicolumn{2}{c}{K2V2} & \multicolumn{2}{c}{K3V3} \\
      \cmidrule(lr){2-3}\cmidrule(lr){4-5}
      Corpus & Overall & $\Delta$ & Overall & $\Delta$ \\
      \midrule
      WikiText-2 & 97.36 & (base) & 98.40 & (base) \\
      PG19           & 97.08 & $-0.28$ & 98.12 & $-0.28$ \\
      C4             & 94.58 & $-2.78$ & 98.23 & $-0.17$ \\
      code           & 93.66 & $-3.70$ & 98.06 & $-0.34$ \\
      \bottomrule
    \end{tabular*}
  \end{subtable}
\end{table}
\vspace{-3mm}

\vspace{-3mm}
\begin{table}[!ht]
\centering
\caption{\textbf{Calibration length sensitivity under prefix
noise.} Cells are PPL mean $\pm$ std across three WT2-train calibration
prefixes (offsets $0$/$10$k/$20$k). $\Delta$ is the change in mean PPL
from $N_{\mathrm{cal}}{=}128$ to $N_{\mathrm{cal}}{=}2048$.}
\label{tab:exp-calib-budget-ppl}
\scriptsize
\setlength{\tabcolsep}{3pt}
\begin{tabular*}{\textwidth}{@{\extracolsep{\fill}}llccccc}
\toprule
Model & KV & fp16 & $N_{\mathrm{cal}}{=}128$ & $N_{\mathrm{cal}}{=}512$ & $N_{\mathrm{cal}}{=}2048$ & $\Delta$ \\
\midrule
Llama-3.1-8B-Instruct & K3V3 & 6.4095
  & 6.6414$_{\text{\textcolor{gray}{~($\pm$0.0192)}}}$
  & 6.6462$_{\text{\textcolor{gray}{~($\pm$0.0137)}}}$
  & 6.6256$_{\text{\textcolor{gray}{~($\pm$0.0130)}}}$ & $-0.0158$ \\
Llama-3.1-8B-Instruct & K2V2 & 6.4095
  & 7.9949$_{\text{\textcolor{gray}{~($\pm$0.0235)}}}$
  & 7.8474$_{\text{\textcolor{gray}{~($\pm$0.0668)}}}$
  & 7.8459$_{\text{\textcolor{gray}{~($\pm$0.0784)}}}$ & $-0.1490$ \\
DS-R1-Qwen-7B & K3V3 & 18.4800
  & 19.0611$_{\text{\textcolor{gray}{~($\pm$0.0229)}}}$
  & 19.0303$_{\text{\textcolor{gray}{~($\pm$0.0294)}}}$
  & 19.0576$_{\text{\textcolor{gray}{~($\pm$0.0328)}}}$ & $-0.0035$ \\
DS-R1-Qwen-7B & K2V2 & 18.4800
  & 23.2010$_{\text{\textcolor{gray}{~($\pm$0.2150)}}}$
  & 23.1319$_{\text{\textcolor{gray}{~($\pm$0.1782)}}}$
  & 23.0211$_{\text{\textcolor{gray}{~($\pm$0.1132)}}}$ & $-0.1799$ \\
\bottomrule
\end{tabular*}
\end{table}

\FloatBarrier

\subsection{Downstream Evaluation}
\label{sec:exp-downstream}

In Section~\ref{sec:exp-attn-diag} we showed that Block-GTQ reduces
RoPE-logit error and preserves the softmax attention distribution. We now ask whether this attention-interface advantage carries to
downstream task quality, focusing on two regimes where K-cache errors are
most consequential: long-context retrieval and understanding, where old
keys must remain useful across a long prompt; and reasoning-style
generation, where small attention perturbations can compound over many
decode steps.

\subsubsection{Long-Context Tasks}
\label{sec:exp-long-context}

\begin{figure*}[!ht]
  \centering
  \includegraphics[width=1\linewidth]{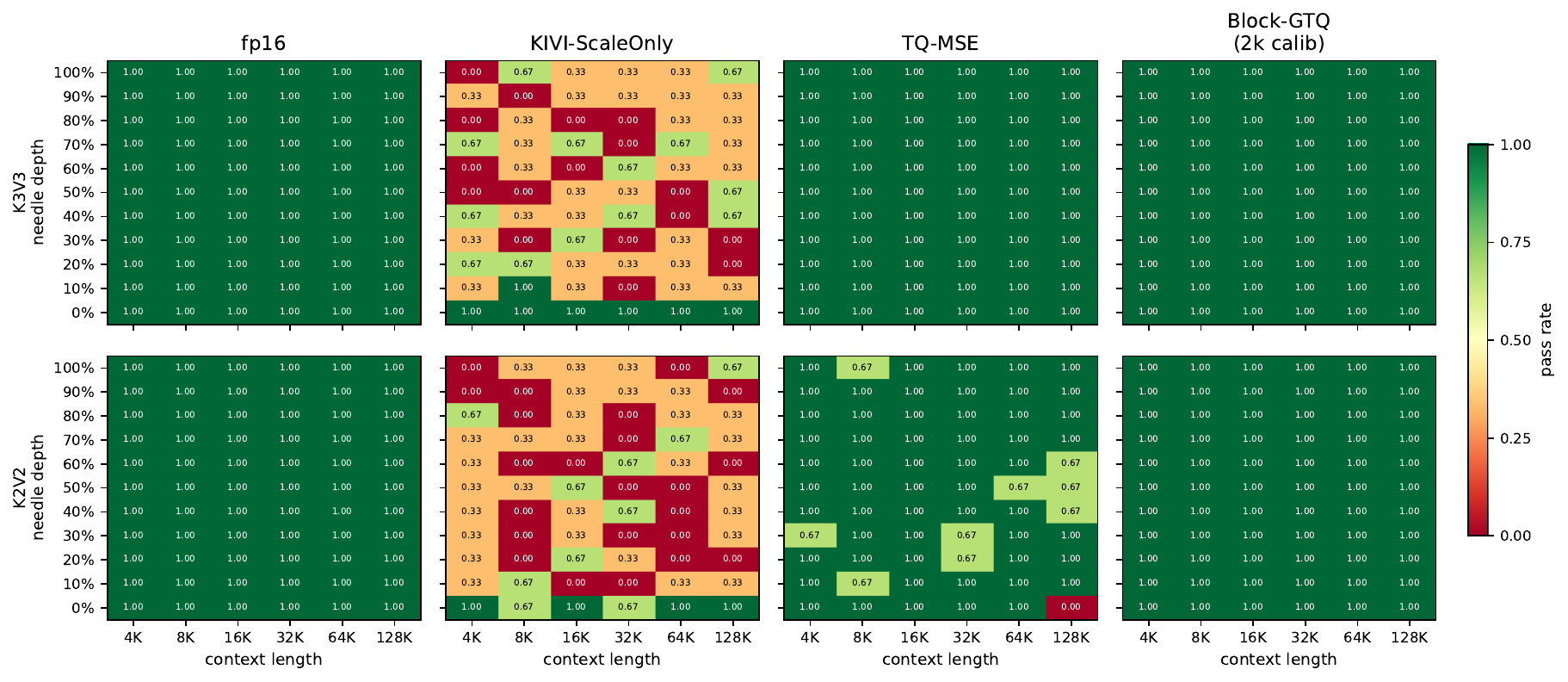}
  \caption{\textbf{NIAH single-needle retrieval on Llama-3.1-8B-Instruct.}
  Pass rate is shown over context length ($4$K--$128$K) and needle depth
  ($0\%$--$100\%$), averaged over three trials per cell. The two rows use the
  same method layout: \texttt{fp16}, \texttt{KIVI-ScaleOnly}
  (Appendix~\ref{para:kivi-scaleonly-note}), \texttt{TQ-MSE}, and
  \texttt{Block-GTQ}. }
  \label{fig:exp-niah-llama-2row}
  \end{figure*}

  \begin{table*}[!ht]
  \centering
  \caption{Multi-task NIAH pass-rate (\%) on (a) Llama-3.1-8B-Instruct and
  (b) Qwen2.5-7B-Instruct. Each entry is averaged over context lengths
  $4$K--$128$K, needle depths ($0\%$--$100\%$), three trials per cell.
  Block-GTQ uses a 2048-token WikiText-2 calibration;
  KIVI-ScaleOnly is defined in Appendix~\ref{para:kivi-scaleonly-note}.}
  \label{tab:exp-niah-multitask}
  \begin{subtable}[t]{0.49\textwidth}
  \centering
  \scriptsize
  \setlength{\tabcolsep}{1.5pt}
  \caption{Llama-3.1-8B-Instruct.}
  \label{tab:exp-niah-multitask-llama}
  \begin{tabular}{lrrrrrrr}
  \toprule  
  Method & single & distr. & multi & m-key & m-val & m-qry & \textbf{Avg} \\
  \midrule
  fp16 & 100.0 & 100.0 & 100.0 & 99.5 & 100.0 & 97.8 & 99.6 \\
  \midrule
  \multicolumn{8}{c}{\textit{K3V3}} \\
  \midrule
  KIVI-ScaleOnly & 38.4 & 49.5 & 36.4 & 33.3 & 26.6 & 28.3 & 35.4 \\
  TQ-MSE & \textbf{100.0} & 99.5 & \textbf{100.0} & \textbf{100.0} & 99.5 & 87.4 & 97.7 \\
  Block-GTQ & \textbf{100.0} & \textbf{100.0} & \textbf{100.0} & \textbf{100.0} & \textbf{99.7} & \textbf{90.7} & \textbf{98.4} \\
  \midrule
  \multicolumn{8}{c}{\textit{K3V2}} \\
  \midrule
  KIVI-ScaleOnly & 39.4 & 50.0 & 34.3 & 31.0 & 24.1 & 27.6 & 34.4 \\
  TQ-MSE & \textbf{100.0} & 99.0 & \textbf{100.0} & 96.1 & 97.0 & \textbf{82.0} & 95.7 \\
  Block-GTQ & \textbf{100.0} & \textbf{100.0} & \textbf{100.0} & \textbf{100.0} & \textbf{99.7} & 81.1 & \textbf{96.8} \\
  \midrule
  \multicolumn{8}{c}{\textit{K2V2}} \\
  \midrule
  KIVI-ScaleOnly & 30.8 & 32.8 & 23.2 & 20.9 & 16.2 & 21.6 & 24.2 \\
  TQ-MSE & 93.9 & 70.7 & 80.8 & 47.6 & 71.9 & 58.8 & 70.6 \\
  Block-GTQ & \textbf{100.0} & \textbf{99.0} & \textbf{100.0} & \textbf{98.5} & \textbf{100.0} & \textbf{86.7} & \textbf{97.4} \\
  \bottomrule
  \end{tabular}
  \end{subtable}
  \hfill
  \begin{subtable}[t]{0.49\textwidth}
  \centering
  \scriptsize
  \setlength{\tabcolsep}{1.5pt}
  \caption{Qwen2.5-7B-Instruct.}
  \label{tab:exp-niah-multitask-qwen}
  \begin{tabular}{lrrrrrrr}
  \toprule
  Method & single & distr. & multi & m-key & m-val & m-qry & \textbf{Avg} \\
  \midrule  
  fp16 & 96.0 & 92.9 & 83.8 & 31.5 & 66.3 & 32.0 & 67.1 \\
  \midrule
  \multicolumn{8}{c}{\textit{K3V3}} \\
  \midrule
  KIVI-ScaleOnly & 59.1 & 44.4 & 37.9 & 20.7 & 32.9 & 16.3 & 35.2 \\
  TQ-MSE & 0.0 & 0.0 & 0.0 & 0.0 & 0.0 & 0.0 & 0.0 \\
  Block-GTQ & \textbf{96.0} & \textbf{91.4} & \textbf{78.8} & \textbf{30.6} & \textbf{62.0} & \textbf{31.8} & \textbf{65.1} \\
  \midrule
  \multicolumn{8}{c}{\textit{K3V2}} \\
  \midrule
  KIVI-ScaleOnly & 59.1 & 42.4 & 24.8 & 23.2 & 24.1 & 19.5 & 32.2 \\
  TQ-MSE & 0.0 & 0.0 & 0.0 & 0.0 & 0.0 & 0.0 & 0.0 \\
  Block-GTQ & \textbf{92.9} & \textbf{90.9} & \textbf{75.3} & \textbf{37.4} & \textbf{61.3} & \textbf{30.8} & \textbf{64.8} \\
  \midrule
  \multicolumn{8}{c}{\textit{K2V2}} \\
  \midrule
  KIVI-ScaleOnly & 20.7 & 14.1 & 10.6 & 12.1 & 6.6 & 7.9 & 12.0 \\
  TQ-MSE & 0.0 & 0.0 & 0.0 & 0.0 & 0.0 & 0.0 & 0.0 \\
  Block-GTQ & \textbf{85.9} & \textbf{77.8} & \textbf{73.2} & \textbf{40.1} & \textbf{54.7} & \textbf{29.1} & \textbf{60.1} \\
  \bottomrule
  \end{tabular}
  \end{subtable}
  \end{table*}

\paragraph{NIAH.} NIAH~\citep{hsieh2024ruler} probes where retrieval
breaks across context length and needle depth. On Llama-3.1-8B-Instruct,
Figure~\ref{fig:exp-niah-llama-2row} shows that Block-GTQ's NIAH
retrieval pattern matches fp16's at both K3V3 and K2V2. Table~\ref{tab:exp-niah-multitask-llama} quantifies this across the
six NIAH subtasks: TQ-MSE drops from
$97.7$ at K3V3 to $70.6$ at K2V2 and KIVI-ScaleOnly never exceeds
$35.4$ Avg, while Block-GTQ stays close to fp16's $99.6$ ceiling,
scoring $98.4$/$96.8$/$97.4$ at K3V3/K3V2/K2V2. The gap is wider on
Qwen2.5-7B-Instruct: TQ-MSE collapses to $0.0$ at every budget and
KIVI-ScaleOnly never exceeds $35.2$ Avg, while Block-GTQ stays close
to fp16's $67.1$ ceiling, scoring $65.1$/$64.8$/$60.1$ at
K3V3/K3V2/K2V2 (Qwen2.5-7B-Instruct heatmap: Appendix~\ref{app:long-context}).

\FloatBarrier
\paragraph{LongBench-EN.}
\label{sec:exp-longbench}

\begin{table*}[!ht]
\centering
\caption{LongBench-EN per-subtask scores on Llama-3.1-8B-Instruct.
Subtask abbreviations and metrics are listed in
Appendix~\ref{app:long-context}; \texttt{Avg} is the unweighted mean.
\texttt{KIVI} denotes KIVI-ScaleOnly
(Appendix~\ref{para:kivi-scaleonly-note}). Higher is better; bold marks
the best quantized method within each budget.}
\label{tab:exp-longbench-main}
\small
\setlength{\tabcolsep}{8pt}
\scriptsize{\begin{tabular}{llccccccccc}
\toprule
Rate & Method & Qasp & MFQA & HP & 2W & Gov & TREC & Pass & LCC & Avg \\
\midrule
fp16 & fp16 & 44.70 & 55.77 & 57.62 & 48.85 & 34.54 & 72.50 & 99.50 & 65.13 & 59.83 \\
\midrule
K3V3 & KIVI & 43.83 & 49.56 & 51.53 & 37.31 & 31.82 & 70.50 & 46.56 & 49.56 & 47.58 \\
 & TQ-MSE & 43.96 & 54.33 & 53.37 & 44.12 & 33.31 & 71.00 & 95.50 & 63.59 & 57.40 \\
 & Block-GTQ & \textbf{44.65} & \textbf{54.80} & \textbf{55.25} & \textbf{46.14} & \textbf{33.79} & \textbf{74.50} & \textbf{98.50} & \textbf{64.97} & \textbf{59.08} \\
\midrule
K3V2 & KIVI & 42.71 & 47.61 & 47.10 & 36.23 & 30.82 & 70.50 & 36.69 & 48.49 & 45.02 \\
 & TQ-MSE & 44.02 & \textbf{53.75} & 52.18 & 41.20 & 33.08 & \textbf{71.00} & 97.00 & 60.96 & 56.65 \\
 & Block-GTQ & \textbf{47.04} & 53.49 & \textbf{53.88} & \textbf{48.20} & \textbf{33.88} & 70.50 & \textbf{97.50} & \textbf{66.20} & \textbf{58.84} \\
\midrule
K2V2 & KIVI & 36.47 & 43.34 & 43.71 & 34.33 & 28.19 & \textbf{70.50} & 7.67 & 43.49 & 38.46 \\
 & TQ-MSE & 32.66 & 43.61 & 33.65 & 27.25 & 28.12 & 55.50 & 31.92 & 42.24 & 36.87 \\
 & Block-GTQ & \textbf{45.26} & \textbf{50.95} & \textbf{46.41} & \textbf{37.41} & \textbf{32.25} & \textbf{70.50} & \textbf{91.50} & \textbf{52.17} & \textbf{53.31} \\
\bottomrule
\end{tabular}}
\end{table*}
\vspace{-1mm}

LongBench-EN~\citep{bai2024longbench} is the natural-task counterpart to
NIAH: it tests whether the quantizer preserves attention well enough for
generation, not just retrieval. Table~\ref{tab:exp-longbench-main} reports
Llama-3.1-8B-Instruct on eight subtasks; full subtask definitions
and the inference protocol are in Appendix~\ref{app:long-context}.
Across all three budgets, Block-GTQ Overall stays closest to the $59.83$
fp16 ceiling ($59.08$/$58.84$/$53.31$ at K3V3/K3V2/K2V2); at the tight
K2V2 budget, TQ-MSE drops to $36.87$ and KIVI-ScaleOnly to $38.46$.

\FloatBarrier

\subsubsection{Reasoning Tasks}
\label{sec:exp-reasoning}

Reasoning tests the cache differently from retrieval: in long
chain-of-thought decoding, small cache errors compound across many
decode steps and surface as a wrong final answer. We evaluate
Block-GTQ on AIME~2024 and AIME~2025 in thinking mode on two
DeepSeek-R1~\citep{deepseekai2025r1} distilled backbones at K3V2,
reporting average pass@1 over $8$ samples per problem.

To separate the contribution of the quantizer from that of the
recent-token fp16 buffer, we compare two regimes. \emph{No buffer}
removes all uncompressed-token windows so every attended key is served
from the compressed cache. \emph{Protected} keeps the first $4$ tokens
(sink) and the last $128$ tokens (recent) as fp16. The $128$-token
recent window matches PM-KVQ's protected
configuration~\citep{liu2025pmkvq}; the $4$-token sink follows the
attention-sink convention~\citep{xiao2024efficient}. We apply this
$4$/$128$ allowance identically across methods; per-method details
are in Appendix~\ref{app:reasoning}.

\begin{table*}[!ht]
  \centering
  \caption{AIME~2024/2025 pass@1 (\%) at K3V2.
  \emph{Protected}: $4$ sink + $128$ recent fp16; \emph{No buffer}:
  both $0$ (Section~\ref{sec:exp-reasoning}). Under no-buffer, KIVI
  is run as KIVI-ScaleOnly
  (Appendix~\ref{para:kivi-scaleonly-note}).}
  \label{tab:exp-aime-main}
  \scriptsize{}
  \setlength{\tabcolsep}{2pt}
  \begin{tabular}{llcccccccccc}
  \toprule
   & & \multicolumn{2}{c}{fp16} & \multicolumn{2}{c}{TQ-MSE} & \multicolumn{2}{c}{KIVI} & \multicolumn{2}{c}{PM-KVQ} & \multicolumn{2}{c}{Block-GTQ} \\
  \cmidrule(lr){3-4}\cmidrule(lr){5-6}\cmidrule(lr){7-8}\cmidrule(lr){9-10}\cmidrule(lr){11-12}
  Model & Regime & AIME'24 & AIME'25 & AIME'24 & AIME'25 & AIME'24 & AIME'25 & AIME'24 & AIME'25 & AIME'24 & AIME'25 \\
  \midrule
  \multirow{2}{*}{\shortstack[l]{DeepSeek-R1-\\Distill-Qwen-7B}}
  & Protected & 54.2 & 37.9 & 7.5 & 8.8 & 52.9 & 35.8 & 45.4 & 27.9 & \textbf{54.2} & \textbf{37.5} \\
  & No buffer & 54.2 & 37.9 & 0.0 & 0.0 & 28.8 & 19.0 & 40.8 & 27.5 & \textbf{51.7} & \textbf{37.5} \\
  \midrule
  \multirow{2}{*}{\shortstack[l]{DeepSeek-R1-\\Distill-Llama-8B}}
  & Protected & 43.3 & 28.8 & 37.9 & 26.7 & 43.3 & 27.1 & 43.3 & 28.8 & \textbf{43.8} & \textbf{30.0} \\
  & No buffer & 43.3 & 28.8 & 26.2 & 20.0 & 7.5 & 10.4 & \textbf{42.9} & \textbf{24.6} & 32.5 & 23.3 \\
  \bottomrule
  \end{tabular}
  \end{table*}

In the protected regime, Block-GTQ stays close to fp16 on both
backbones (matching on DeepSeek-R1-Distill-Qwen-7B, slightly
exceeding on DeepSeek-R1-Distill-Llama-8B). TQ-MSE is notably
lower on both, especially on DeepSeek-R1-Distill-Qwen-7B. PM-KVQ
matches fp16 on DeepSeek-R1-Distill-Llama-8B (still slightly
below Block-GTQ) but is lower on DeepSeek-R1-Distill-Qwen-7B.
In the no-buffer regime (AIME~2024/AIME~2025), Block-GTQ stays close
to fp16 on DeepSeek-R1-Distill-Qwen-7B ($51.7/37.5$ vs $54.2/37.9$)
but is lower on DeepSeek-R1-Distill-Llama-8B ($32.5/23.3$ vs
$43.3/28.8$). PM-KVQ~\citep{liu2025pmkvq} shows the opposite pattern:
leading at $42.9/24.6$ on DeepSeek-R1-Distill-Llama-8B but lower at
$40.8/27.5$ on DeepSeek-R1-Distill-Qwen-7B. TQ-MSE collapses on both
backbones (worst at $0.0/0.0$ on DeepSeek-R1-Distill-Qwen-7B), as
does KIVI without the buffer. Block-GTQ's no-buffer drop on
DeepSeek-R1-Distill-Llama-8B reflects a bit-allocation difference:
PM-KVQ allocates K and V jointly per layer via loss-gradient
sensitivity, whereas Block-GTQ allocates only K per RoPE block
(via energy) and leaves V at uniform TQ-MSE. Without the
recent-token buffer, this K-only allocation can surface as a
quality gap on V-sensitive backbones. Adding a V-side allocator
to Block-GTQ is a natural extension.

\subsection{Block-GTQ Deployment}
\label{sec:exp-deployment}

We run Qwen2.5-3B-Instruct on a single H800 GPU at the K3V3
operating point and report decode-step latency, peak GPU memory, and
downstream perplexity. We compare Block-GTQ against an fp16
FlashAttention-2 (FA-2) baseline and uniform-TQ-MSE. Block-GTQ and
uniform-TQ-MSE both run through our fused-attention packed-cache
path (Section~\ref{sec:system-architecture}), which unpacks
compressed K/V codes inline within the attention kernel; they differ
only in K layout: uniform-TQ-MSE uses a single bit-width per head, while Block-GTQ varies it across RoPE
blocks.

\begin{figure*}[!ht]
  \centering
  \includegraphics[width=1\linewidth]{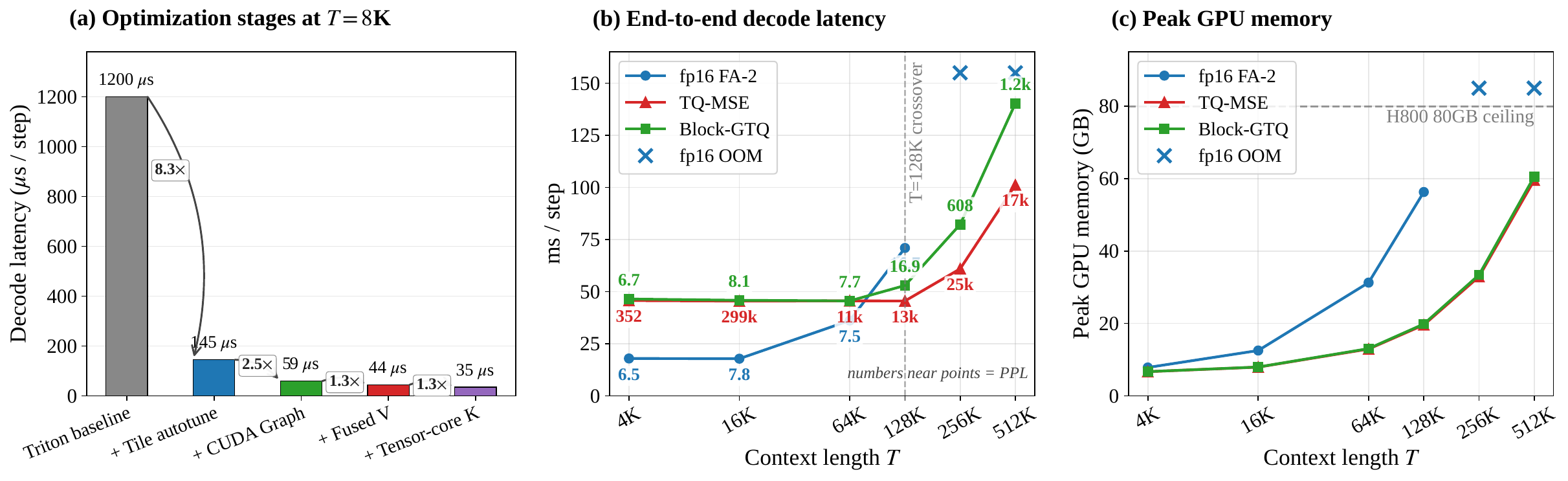}
  \caption{Kernel-optimization gains, decode latency, and peak
  memory for Block-GTQ on Qwen2.5-3B-Instruct (single H800). Panel
  (a) decomposes the speedup contribution of each kernel-optimization
  stage---each stage targets a specific bottleneck of the
  packed-cache decode path. As context length grows
  (panels (b), (c)), Block-GTQ's optimized decode kernel overtakes
  fp16 FA-2 at $T = 128$K and continues to run cleanly at
  $T \ge 256$K where fp16 OOMs. PPL is annotated on in panel (b).}
  \label{fig:exp-e2e-decode-preview}
  \end{figure*}

At short context ($T \le 64$K), Block-GTQ's decode kernel is slower
than fp16 FA-2: the packed-cache path pays per-step overhead for
in-kernel unpacking of compressed K/V codes that fp16 FA-2 does not
incur. As context grows, KV bandwidth dominates per-step decode and
Block-GTQ overtakes fp16; at $T = 128$K, Block-GTQ runs $1.34\times$
faster than fp16 and cuts peak memory from $56.31$ GB to $19.85$ GB.
Beyond this, fp16 OOMs at $T \ge 256$K because \emph{peak total}
memory exceeds the $80$ GB GPU budget, while Block-GTQ continues to
run. Uniform-TQ-MSE is modestly faster than Block-GTQ on decode
($\sim\!14\%$ at $T=128$K) and has a slightly smaller KV footprint
($3.88\times$ vs.\ $3.24\times$ compression; the gap comes from
per-segment metadata needed by Block-GTQ's mixed-rate K storage).
However, TQ-MSE's quality collapses: its PPL
is orders of magnitude worse than Block-GTQ's at every tested
context length, while Block-GTQ stays close to fp16's PPL
(annotated in Figure~\ref{fig:exp-e2e-decode-preview}(b); full
values in Table~\ref{tab:app-deployment-ppl-n1000})---making
Block-GTQ the deployable operating point. Full latency, memory, and prefill
matrices are in Appendix~\ref{app:deployment}.


\section{Conclusion}
\label{sec:conclusion}


We reframe low-bit K-cache compression for RoPE models as a block-level
rate-allocation problem. Because RoPE attention decomposes exactly over
two-dimensional frequency blocks and block energy is non-uniform,
Block-GTQ uses a label-free energy score to assign more bits to high-energy
RoPE blocks. Both K and V are encoded with TQ-MSE, V at a uniform
bit-width.

On a diverse ten-model panel, at both $2$ and $3$ b/dim K-only, Block-GTQ cuts per-layer RoPE-logit MAE by $32$--$80\%$ across models and wins all $367/367$ layer comparisons at each budget against uniform TQ-MSE. Across NIAH, LongBench-EN, and AIME, Block-GTQ stays close to
the fp16 ceiling at tight K budgets, where uniform TQ-MSE typically collapses.
On a single H800 at the K3V3 budget, our packed-cache serving path enables long-context inference
that fp16 FlashAttention2 cannot reach: with $3.24\times$ KV-cache compression
and quality comparable to fp16, it runs $1.34\times$ faster at $128$K context
and remains feasible at $256$K/$512$K where fp16 OOMs.

\paragraph{Limitations and future work.}
Block-GTQ allocates bits only on K, leaving V uniform. A V-side allocator,
joint K+V optimization, and denser packing could further reduce memory. The
fused decode path is an initial single-GPU implementation; multi-GPU and
batched serving are open directions.

\bibliographystyle{plainnat}
\bibliography{references_arxiv}

\appendix

\section*{Appendix Roadmap}
Appendix~\ref{app:theory-details} collects proofs for Block-GTQ (error
bound, block weight, greedy optimality).
Appendix~\ref{app:attention-diagnostics} details the ten-model panel,
the bit-allocation analysis, and the attention-fidelity diagnostics.
Appendix~\ref{app:calibration} ablates calibration along length, score,
and corpus, and reports cross-model PPL and allocation stability.
Appendix~\ref{app:downstream} provides long-context (NIAH, LongBench)
and reasoning (AIME) protocols. Appendix~\ref{app:deployment} provides
the deployment data tables: footprint, latency/memory, and long-context
perplexity.

\section{Supplementary Theory Details}
\label{app:theory-details}

This appendix collects the theory details that are useful for auditability
but are not needed in the main narrative. The main text uses three facts:
the deployed K-cache error is a RoPE-logit error
(Section~\ref{sec:problem}), that error admits a per-block bound
(Lemma~\ref{lem:theory-k-only}), and the resulting allocation objective is
optimized exactly by greedy allocation (Theorem~\ref{thm:theory-greedy}). The
details below explain the coordinate change, the proof of the per-block error
bound, the absolute-error chain behind the block weight, and the greedy
allocation proof.

\subsection{Post-RoPE Cache and Pre-RoPE Coordinates}
\label{app:theory-coordinate-change}

Although the cache stores post-RoPE keys, the analysis can be written in
pre-RoPE coordinates. If $\hat{\bk}_m^{\mathrm R}$ is the decoded post-RoPE
key and $R_t$ denotes the absolute RoPE rotation at position $t$, define
\(\hbk_m:=R_m^\top\hat{\bk}_m^{\mathrm R}\).
Then, for a query at position $n$,
\begin{equation}
\label{eq:app-post-to-pre-rope}
(\bq_n^{\mathrm R})^\top\hat{\bk}_m^{\mathrm R}
=
(R_n\bq_n)^\top\hat{\bk}_m^{\mathrm R}
=
\bq_n^\top R_n^\top R_m\hbk_m
=
\bq_n^\top R_{m-n}\hbk_m .
\end{equation}
RoPE is orthogonal block by block, so this coordinate change does not change
block norms. It only lets us express the deployed post-RoPE cache error as a
relative-position logit error.

\FloatBarrier

\subsection{Proof of the Per-Block Accounting Bound}
\label{app:theory-block-bound-proof}

\begin{lemma}[Per-block accounting of attention-logit error]
\label{lem:theory-k-only}
For a query at position $n$, a cached key at position $m$, and the equivalent
pre-RoPE decoded key $\hbk_m=R_m^\top\hat{\bk}_m^{\mathrm R}$, let
$\be_{\bk,m}^{(i)}=\bk_m^{(i)}-\hbk_m^{(i)}$ and
\[
\mathcal E_{n,m}
:=
\left|
\mathcal K_{m-n}(\bq_n,\bk_m)
-
\mathcal K_{m-n}(\bq_n,\hbk_m)
\right|.
\]
Then
\[
\mathcal E_{n,m}
\le
\sum_i
\|\bq_n^{(i)}\|_2\,\|\be_{\bk,m}^{(i)}\|_2 .
\]
\end{lemma}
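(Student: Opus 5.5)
The plan is to reduce the logit error to a single bilinear expression and then bound it block by block. First, by the coordinate change in Appendix~\ref{app:theory-coordinate-change}, specifically \eqref{eq:app-post-to-pre-rope}, both the exact and the decoded logits can be written in pre-RoPE coordinates with relative offset $\Delta=m-n$:
\[
\mathcal K_{m-n}(\bq_n,\bk_m)=\bq_n^\top R_{m-n}\bk_m,\qquad
\mathcal K_{m-n}(\bq_n,\hbk_m)=\bq_n^\top R_{m-n}\hbk_m .
\]
The logit map is linear in its key argument, so the difference equals $\bq_n^\top R_{m-n}(\bk_m-\hbk_m)$.

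Next I use the block-diagonal form $R_{m-n}=\mathrm{diag}(R((m-n)\theta_1),\ldots,R((m-n)\theta_L))$ from Subsection~\ref{sec:background-rope}. This splits the quantity into a sum with no cross-block terms:
\[
\mathcal E_{n,m}=\Bigl|\sum_{i=1}^{L}\bq_n^{(i)\top}R((m-n)\theta_i)\,\be_{\bk,m}^{(i)}\Bigr|.
\]
By the triangle inequality this is at most $\sum_i |\bq_n^{(i)\top}R((m-n)\theta_i)\be_{\bk,m}^{(i)}|$.

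Finally, I bound each term separately. Cauchy--Schwarz in $\mathbb R^2$ bounds term $i$ by $\|\bq_n^{(i)}\|_2\,\|R((m-n)\theta_i)\be_{\bk,m}^{(i)}\|_2$. Because each $2\times2$ rotation is orthogonal, it preserves the $\ell_2$ norm, so the rotated error has norm $\|\be_{\bk,m}^{(i)}\|_2$. Summing over $i$ gives the claimed bound.

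No step is a real obstacle. The only point needing care is the first one, namely checking that $R_n^\top R_m=R_{m-n}$. This holds because each block is a planar rotation, so the blocks commute and their angles add. It is exactly the identity \eqref{eq:app-post-to-pre-rope}, which I take as given. I would also note that the blocks $\be_{\bk,m}^{(i)}$ are defined in pre-RoPE coordinates through $\hbk_m=R_m^\top\hat\bk_m^{\mathrm R}$, so the error is measured consistently with the decomposition.
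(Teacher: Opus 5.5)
Your proof is correct and follows essentially the same route as the paper. Both write the logit difference, via the block-diagonal structure of $R_{m-n}$, as a sum of per-block terms with no cross-block contributions, then apply the triangle inequality and Cauchy--Schwarz blockwise, and finish by using that each $2\times 2$ rotation preserves the $\ell_2$ norm. Your detour through the post-to-pre-RoPE identity is harmless, but it is not needed here, since $\mathcal K_{m-n}$ is already defined in pre-RoPE coordinates.
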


\begin{proof}
With $\Delta=m-n$, the block decomposition in
Section~\ref{sec:background-rope} gives
\[
\mathcal K_\Delta(\bq_n,\bk_m)-\mathcal K_\Delta(\bq_n,\hbk_m)
=
\sum_i
\bq_n^{(i)\top}R(\Delta\theta_i)\be_{\bk,m}^{(i)} .
\]
The triangle inequality and Cauchy--Schwarz yield
\[
\mathcal E_{n,m}
\le
\sum_i
\left|
\bq_n^{(i)\top}R(\Delta\theta_i)\be_{\bk,m}^{(i)}
\right|
\le
\sum_i
\|\bq_n^{(i)}\|_2
\|R(\Delta\theta_i)\be_{\bk,m}^{(i)}\|_2 .
\]
Each $R(\Delta\theta_i)$ is a rotation, so it preserves the block norm:
$\|R(\Delta\theta_i)\be_{\bk,m}^{(i)}\|_2
=\|\be_{\bk,m}^{(i)}\|_2$.
\end{proof}

\FloatBarrier

\subsection{From the Block Bound to the RoPE-Block Weight}
\label{app:theory-ideal-weight}

Lemma~\ref{lem:theory-k-only} gives, for each query-key pair,
\[
\mathcal E_{n,m}
\le
\sum_i \|\bq_n^{(i)}\|_2\|\be_{\bk,m}^{(i)}\|_2 .
\]
Suppose a local quantizer at bit width $b_i$ contributes a relative error
factor $\alpha_i(b_i)$ in block $i$, so that the typical block error is
bounded by $\alpha_i(b_i)\|\bk^{(i)}\|_2$. Taking expectations over
future query-key pairs gives
\begin{equation}
\label{eq:app-expected-absolute-kernel}
\E[\mathcal E]
\lesssim
\sum_i
\alpha_i(b_i)
\underbrace{\E[\|\bq^{(i)}\|_2\|\bk^{(i)}\|_2]}_{s_i^\star}.
\end{equation}
This derivation identifies the logit-error block weight $s_i^\star$; it is
not the final method loss. Block-GTQ then uses the energy surrogate $s_i$ from
Section~\ref{sec:scoring} together with the TQ-MSE bit-error decay. The
resulting allocation objective
\[
J(\mathbf b)=\sum_i s_i4^{-b_i}
\]
should be read as a rate-allocation proxy rather than a tight consequence of
the absolute-error bound above: the score comes from the RoPE-logit
sensitivity, while the factor $4^{-b_i}$ comes from the local MSE-oriented
quantizer.

The $4^{-b_i}$ rate is not arbitrary: it matches the rate at which the
\emph{squared} logit error decays. Squaring the per-block bound and applying
Cauchy--Schwarz gives
$\mathcal{E}_{n,m}^2 \le L \sum_i \|\bq_n^{(i)}\|_2^2 \|\be_{\bk,m}^{(i)}\|_2^2$;
together with the TQ-MSE squared-error bound
$\E\|\be^{(i)}\|_2^2 \lesssim 4^{-b_i}\|\bk^{(i)}\|_2^2$, this yields a
mean-squared logit-error bound of the form
$\sum_i 4^{-b_i}\,\E[\|\bq^{(i)}\|_2^2\,\|\bk^{(i)}\|_2^2]$. The
$4^{-b_i}$ rate in $J$ is thus consistent with bounding the squared logit
error, with $s_i$ serving as a simpler second-moment proxy for the product
weight.

\FloatBarrier

\subsection{Proof of Greedy Allocation Optimality}
\label{app:theory-greedy-proof}

This section gives the full exchange proof for
Theorem~\ref{thm:theory-greedy}. Fix positive scores $s_i$, bounds
$b_{\min}\le b_i\le b_{\max}$, and a feasible integer budget
$B\in[Lb_{\min},Lb_{\max}]$. Start all blocks at $b_{\min}$ and let
$K=B-Lb_{\min}$ be the number of extra bit units to assign. For block $i$,
define the gain of
its $r$-th extra bit as
\[
g_{i,r}
:=
s_i4^{-(b_{\min}+r-1)}
-
s_i4^{-(b_{\min}+r)}
=
\tfrac34 s_i4^{-(b_{\min}+r-1)},
\]
for $r=1,\ldots,b_{\max}-b_{\min}$. These gains decrease geometrically in
$r$.

Choosing a final bit width $b_i=b_{\min}+k_i$ is equivalent to choosing the
first $k_i$ gains
$g_{i,1},\ldots,g_{i,k_i}$ from block $i$. Hence every feasible allocation
chooses exactly $K$ gains subject to a prefix constraint: it may choose
$g_{i,r}$ only if it also chooses
$g_{i,1},\ldots,g_{i,r-1}$. The value of an allocation is the total chosen
gain, because subtracting these gains from the all-$b_{\min}$ objective gives
$J(\mathbf b)$.

Algorithm~\ref{alg:blockgtq-greedy} repeatedly chooses the largest available
gain, where available means that the required prefix for that block has
already been chosen. We prove optimality by induction on the greedy prefix.
Assume there is an optimal feasible set $O$ containing the first $t$ greedy
gains, and let $P_t$ denote that prefix. Let $g$ be the next greedy gain, from
block $a$. If $g\in O$, the invariant already holds for $P_t\cup\{g\}$.
Otherwise, add $g$ to $O$; this is prefix-feasible because $g$ was available
after $P_t$, and $P_t\subseteq O$. The enlarged set has one too many gains, so
we remove a terminal gain from another block without reducing value. Since
$O$ contains $K$ gains but omits $g$, some block $j$ has gains in
$O\setminus P_t$. Let $h_j$ be the first such gain after the prefix of block
$j$ already present in $P_t$. This gain was available to greedy at step $t+1$,
so $g\ge h_j$. Remove instead the last selected gain from block $j$ in $O$;
monotonicity gives this terminal gain value at most $h_j$, hence at most $g$,
and removing a terminal gain preserves the prefix constraint. The exchange
therefore produces an optimal feasible set containing $P_t\cup\{g\}$. Repeating
for $t=0,\ldots,K-1$ proves the greedy allocation is optimal.

\section{Attention-Interface Diagnostic Details}
\label{app:attention-diagnostics}

The main text reports the bit-allocation fingerprint, the cross-model
RoPE-logit error summary, and the panel-wide softmax-KL bars and
top-$10$ overlap scatter. This appendix supplies the model panel,
activation-extraction rules, the panel-level bit-allocation analysis
(aggregate distributions and per-layer heterogeneity), metric
definitions, the per-layer RoPE-logit error protocol, and per-model
softmax-KL and top-$10$ overlap tables.

\subsection{Model Panel and Activation Extraction}
\label{app:attention-model-panel}

Table~\ref{tab:exp-model-panel} lists the ten-model panel used for the
cross-architecture attention diagnostics. The panel is chosen for
architectural coverage rather than leaderboard coverage. Nine models use GQA
(small to larger Qwen2.5, Qwen3 with QK-RMSNorm including the MoE
Qwen3-30B-A3B, Llama-3.1, two reasoning-distilled DeepSeek-R1 backbones,
Mistral-Nemo, and the fused-QKV GLM-4-9B), and one uses MLA (DS-V2-Lite,
which is also MoE). For brevity in tables, we abbreviate
DeepSeek-R1-Distill-Llama-8B, DeepSeek-R1-Distill-Qwen-7B, and
DeepSeek-V2-Lite as DS-R1-Llama-8B, DS-R1-Qwen-7B, and DS-V2-Lite,
respectively.

\begin{table}[!h]
\centering
\caption{Ten-model panel used for the attention diagnostics and aggregate
bit-allocation tables. ``Geometry'' reports number of layers, query/KV head
counts, and per-head dimension; the last column notes each model's role in
the panel and any non-standard calibration handling.}
\label{tab:exp-model-panel}
\small
\setlength{\tabcolsep}{3pt}
\begin{tabular}{p{0.30\linewidth} p{0.25\linewidth} p{0.37\linewidth}}
\toprule
Model & Geometry & Role and calibration handling \\
\midrule
Qwen2.5-3B & $36$L, $16/2$, $d_h{=}128$ &
Small-KV-head GQA stress case. \\
Qwen2.5-14B-Instruct & $48$L, $40/8$, $d_h{=}128$ &
Larger Qwen2.5 GQA. \\
Qwen3-8B & $36$L, $32/8$, $d_h{=}128$ &
Dense Qwen3 GQA; calibrate after the model's post-projection QK-RMSNorm. \\
Qwen3-30B-A3B & $48$L, $32/4$, $d_h{=}128$ &
Sparse MoE Qwen3 GQA; calibrate after QK-RMSNorm. \\
Llama-3.1-8B-Instruct & $32$L, $32/8$, $d_h{=}128$ &
Llama-family GQA reference. \\
DS-R1-Llama-8B & $32$L, $32/8$, $d_h{=}128$ &
Reasoning-distilled Llama GQA. \\
DS-R1-Qwen-7B & $28$L, $28/4$, $d_h{=}128$ &
Reasoning-distilled Qwen GQA. \\
Mistral-Nemo-12B & $40$L, $32/8$, $d_h{=}128$ &
Non-Qwen/Llama dense GQA reference. \\
GLM-4-9B & $40$L, $32/2$, $d_h{=}128$ &
GQA with fused QKV; slice fused projection into Q and K. \\
DS-V2-Lite & $27$L, $16/1$, $d_{\mathrm{rope}}{=}64$ &
MLA + MoE; uses the shared decoupled RoPE-key subspace. \\
\bottomrule
\end{tabular}
\end{table}

Two models in the panel deviate from the standard GQA Q/K layout, so they
need an extra step. GLM-4-9B fuses Q, K, and V into a single \texttt{query\_key\_value}
projection matrix instead of the three separate matrices
(\texttt{q\_proj}, \texttt{k\_proj}, \texttt{v\_proj}) used by the other GQA
models. This is an implementation-level fusion that leaves the attention
math unchanged. We apply the fused projection, slice its output along the
last dimension into Q, K, V, and feed Q and K through the same
GQA averaging used elsewhere. DeepSeek-V2-Lite uses MLA, in which the K vector consumed by attention has
two components: a content part recovered from a low-rank latent      
representation (not RoPE-rotated, identical across query heads) and a small
decoupled RoPE-key that carries position through RoPE rotation (also shared
across all query heads). Block-GTQ targets only RoPE-rotated keys, so the
latent is outside its scope and the diagnostic uses only the decoupled
RoPE-key path. In the panel table this path appears as one shared head with
$d_{\mathrm{rope}}{=}64$, treated as a single KV head common to all query
heads.

\FloatBarrier

\subsection{Bit Allocation across Models}
\label{app:bit-alloc-aggregate}

\paragraph{Aggregate distributions.}
Block-GTQ's energy scores are calibrated on $2048$ WikiText-2 train tokens
(full protocol in Appendix~\ref{app:attention-metrics}). At both the
$3$ b/dim and $2$ b/dim budgets, every model produces a non-uniform
allocation: the budget bit width is the mode, with nontrivial mass at
lower and higher widths; the mode shifts from $3$ to $2$ bits between the
two budgets for ten models.
Tables~\ref{tab:exp-bit-alloc-3b} and~\ref{tab:app-bit-alloc-2b} give the
per-model percentages at each budget, and
Figure~\ref{fig:app-alloc-stacked-grid-2b} plots the per-layer fingerprint
at the $2$ b/dim budget.

\begin{table}[!h]
\centering
\caption{Aggregate Block-GTQ bit-allocation distribution at the $3$ b/dim
budget (numeric counterpart of Figure~\ref{fig:exp-bit-alloc-fingerprint}).
Each cell is the percentage of all (layer, head, frequency-block) triples
in that model assigned the given bit width.}
\label{tab:exp-bit-alloc-3b}
\small
\setlength{\tabcolsep}{4pt}
\begin{tabular}{l l r r r r r r r r r}
\toprule
Model & Arch & Freqs & 1b\% & 2b\% & 3b\% & 4b\% & 5b\% & 6b\% & 7b\% & 8b\% \\
\midrule
Qwen2.5-3B       & GQA     & 64 & 0.5 & 23.5 & 57.2 & 14.6 & 3.1 & 0.9 & 0.2 & --- \\
Qwen2.5-14B      & GQA     & 64 & 0.1 & 20.4 & 62.4 & 14.3 & 2.0 & 0.7 & --- & --- \\
Qwen3-8B         & GQA     & 64 & 1.2 & 22.0 & 57.4 & 15.4 & 3.4 & 0.6 & 0.1 & --- \\
Qwen3-30B-A3B    & MoE     & 64 & 1.7 & 20.0 & 60.7 & 13.5 & 3.0 & 0.8 & 0.3 & 0.1 \\
Llama-3.1-8B     & GQA     & 64 & 0.3 & 19.0 & 65.0 & 12.2 & 3.1 & 0.4 & --- & --- \\
DS-R1-Llama-8B   & GQA     & 64 & 0.1 & 18.1 & 67.2 & 11.3 & 3.0 & 0.3 & --- & --- \\
DS-R1-Qwen-7B    & GQA     & 64 & 1.2 & 23.5 & 56.9 & 13.9 & 2.5 & 1.4 & 0.3 & 0.3 \\
Mistral-Nemo-12B & GQA     & 64 & --- & 16.7 & 70.3 & 10.3 & 1.9 & 0.9 & --- & --- \\
GLM-4-9B         & GQA  & 64 & 0.4 & 18.0 & 67.7 & 10.4 & 2.3 & 1.1 & 0.1 & --- \\
DS-V2-Lite       & MLA     & 32 & --- & 24.2 & 57.5 & 12.7 & 5.2 & 0.3 & --- & --- \\
\bottomrule
\end{tabular}
\end{table}

\begin{table}[!h]
\centering
\caption{Aggregate Block-GTQ bit-allocation distribution at the $2$ b/dim
budget (per-layer fingerprint in
Figure~\ref{fig:app-alloc-stacked-grid-2b}). Each cell is the percentage of
all (layer, head, frequency-block) triples in that model assigned the given
bit width.}
\label{tab:app-bit-alloc-2b}
\small
\setlength{\tabcolsep}{4pt}
\begin{tabular}{l l r r r r r r r r r}
\toprule
Model & Arch & Freqs & 1b\% & 2b\% & 3b\% & 4b\% & 5b\% & 6b\% & 7b\% & 8b\% \\
\midrule
Qwen2.5-3B       & GQA     & 64 & 24.2 & 57.1 & 14.6 & 3.1 & 0.8 & 0.2 & --- & --- \\
Qwen2.5-14B      & GQA     & 64 & 20.6 & 62.4 & 14.2 & 2.0 & 0.7 & --- & --- & --- \\
Qwen3-8B         & GQA     & 64 & 24.1 & 56.8 & 15.2 & 3.3 & 0.6 & 0.1 & --- & --- \\
Qwen3-30B-A3B    & MoE     & 64 & 22.8 & 60.0 & 13.2 & 2.8 & 0.8 & 0.3 & 0.1 & --- \\
Llama-3.1-8B     & GQA     & 64 & 19.4 & 65.1 & 12.1 & 3.1 & 0.4 & --- & --- & --- \\
DS-R1-Llama-8B   & GQA     & 64 & 18.2 & 67.2 & 11.3 & 3.0 & 0.3 & --- & --- & --- \\
DS-R1-Qwen-7B    & GQA     & 64 & 25.1 & 56.7 & 13.8 & 2.4 & 1.4 & 0.3 & 0.2 & --- \\
Mistral-Nemo-12B & GQA     & 64 & 16.7 & 70.3 & 10.3 & 1.9 & 0.9 & --- & --- & --- \\
GLM-4-9B         & GQA & 64 & 18.4 & 67.9 & 10.2 & 2.2 & 1.1 & 0.1 & --- & --- \\
DS-V2-Lite       & MLA     & 32 & 24.2 & 57.5 & 12.7 & 5.2 & 0.3 & --- & --- & --- \\\bottomrule
\end{tabular}
\end{table}

\begin{figure}[!ht]
\centering
\includegraphics[width=0.95\linewidth]{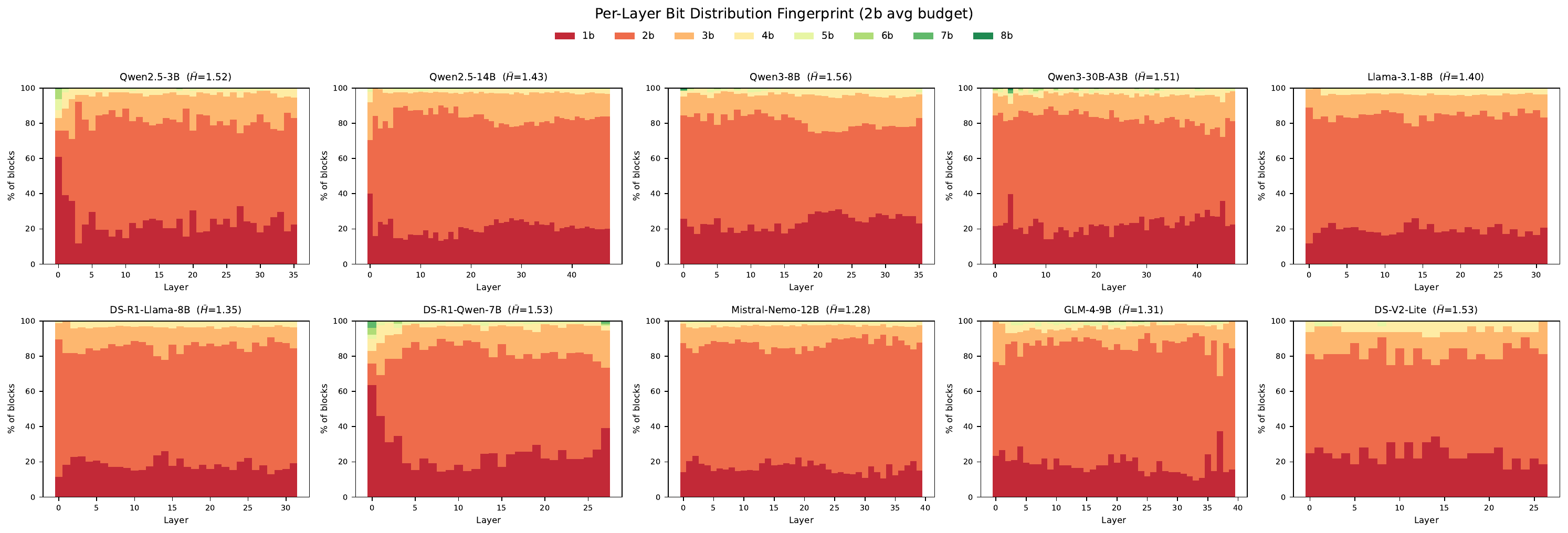}
\caption{\textbf{Allocator fingerprint at the $2$ b/dim budget.}
Per-layer bit-width distribution for each model. The numeric counterpart is
Table~\ref{tab:app-bit-alloc-2b}.}
\label{fig:app-alloc-stacked-grid-2b}
\end{figure}
\FloatBarrier
\paragraph{Per-layer heterogeneity.}
The aggregate tables above show distributions at a single budget but hide how
the allocation varies across layers within each model. For each layer $\ell$
we collapse the allocation over all (head, frequency-block) pairs into a
histogram $\{n_b^{(\ell)}\}_{b=1}^{8}$ and report two layer-level statistics:
the distinct-bit-width count
$\mathrm{grps}^{(\ell)} = |\{b : n_b^{(\ell)} > 0\}|$ and the Shannon entropy
$H^{(\ell)} = -\sum_b p_b^{(\ell)} \log_2 p_b^{(\ell)}$ in bits ($H{=}0$ marks
a single-bit-width layer; $H \approx 3$ marks near-uniform coverage of all
$8$ widths). Table~\ref{tab:app-per-layer-bits-summary} reports the per-model
mean and spread of both statistics at $3$ b/dim, and
Figure~\ref{fig:app-entropy-curves} plots $H^{(\ell)}$ against normalized
layer depth for each model. Every model uses multiple bit widths per layer
($\overline{\mathrm{grps}} \in [4.0, 5.6]$), the entropy curves typically
oscillate around $H \in [1.3, 1.6]$, and the most heterogeneous layer
varies by model.

\begin{table}[!h]
\centering
\caption{Per-layer bit-distribution summary across all ten models
at $3$ b/dim. $\overline{\mathrm{grps}}$ is the mean number of
distinct bit levels per layer; $\overline{H}$ is the mean per-layer
Shannon entropy (bits); $\sigma_H$ is its standard deviation across
layers; $H_{\max}$ (L) and $H_{\min}$ (L) are the most/least
heterogeneous layer indices, with the corresponding entropy
value.}
\label{tab:app-per-layer-bits-summary}
\small
\setlength{\tabcolsep}{4pt}
\begin{tabular}{lccccccc}
\toprule
Model & Arch & Layers & $\overline{\mathrm{grps}}$ & $\overline{H}$ & $\sigma_H$ & $H_{\max}$ (L) & $H_{\min}$ (L) \\
\midrule
Qwen2.5-3B     & GQA & 36 & 4.9 & 1.55 & 0.24 & 2.32 (0)  & 0.98 (3)  \\
Qwen2.5-14B    & GQA & 48 & 5.0 & 1.44 & 0.16 & 1.98 (0)  & 1.12 (14) \\
Qwen3-8B       & GQA & 36 & 5.4 & 1.60 & 0.17 & 1.94 (29) & 1.32 (2)  \\
Qwen3-30B-A3B  & MoE & 48 & 5.6 & 1.56 & 0.19 & 2.18 (45) & 1.16 (11) \\
Llama-3.1-8B   & GQA & 32 & 5.0 & 1.42 & 0.13 & 1.64 (25) & 1.04 (0)  \\
DS-R1-Llama-8B & GQA & 32 & 4.9 & 1.35 & 0.14 & 1.63 (14) & 1.03 (0)  \\
DS-R1-Qwen-7B  & GQA & 28 & 5.6 & 1.58 & 0.32 & 2.43 (0)  & 1.16 (8)  \\
Mistral-Nemo-12B & GQA & 40 & 5.0 & 1.28 & 0.15 & 1.57 (2)  & 0.95 (33) \\
GLM-4-9B       & GQA & 40 & 4.8 & 1.33 & 0.23 & 2.07 (37) & 0.87 (33) \\
DS-V2-Lite     & MLA & 27 & 4.0 & 1.53 & 0.16 & 1.80 (14) & 1.14 (24) \\
\bottomrule
\end{tabular}
\end{table}

\begin{figure}[!ht]
\centering
\includegraphics[width=0.95\linewidth]{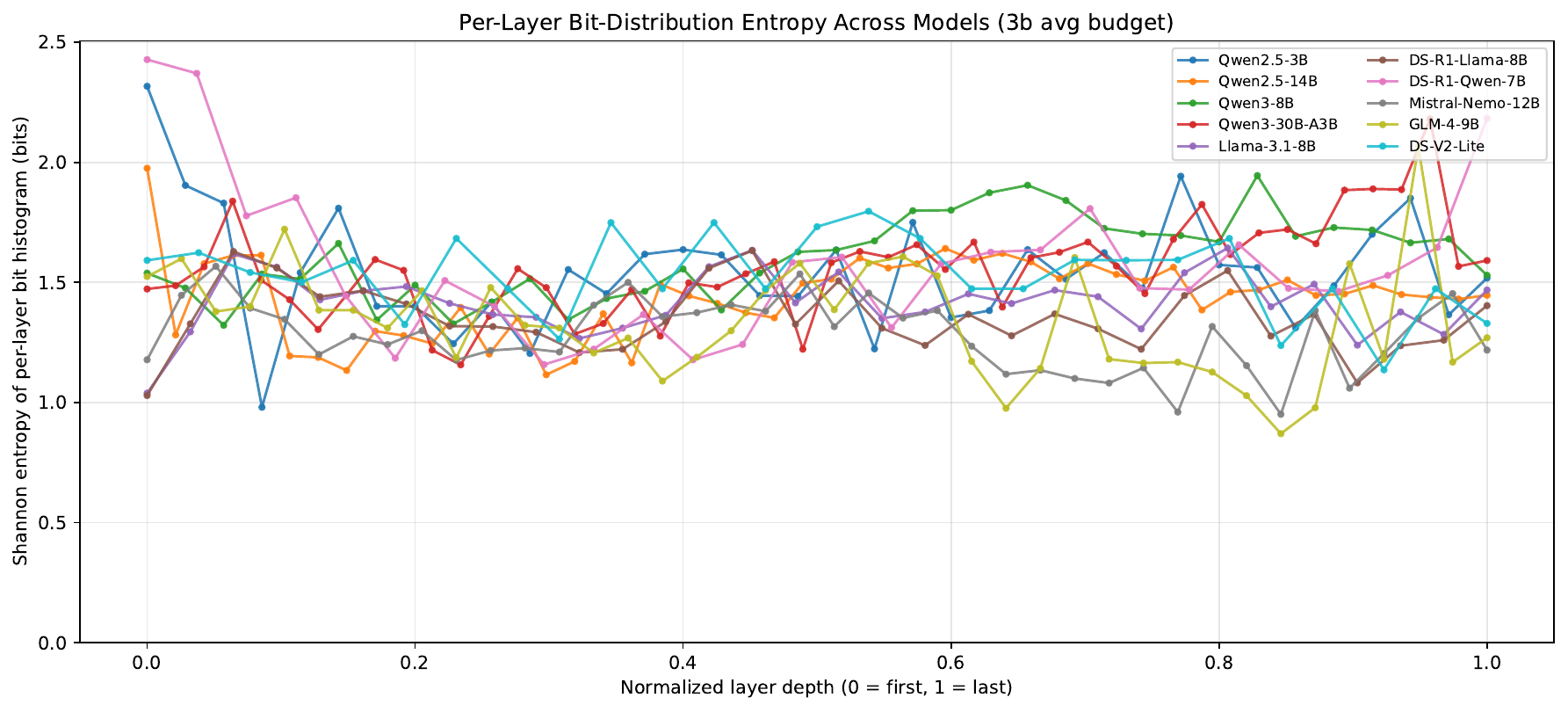}
\caption{Per-layer Shannon entropy $H^{(\ell)}$ of the bit-width histogram
at $3$ b/dim across normalized layer depth ($0$ = first layer, $1$ = last
layer). Each curve is one model; per-model means and extrema are listed in
Table~\ref{tab:app-per-layer-bits-summary}.}
\label{fig:app-entropy-curves}
\end{figure}

\FloatBarrier

\subsection{Per-Layer RoPE-Logit MAE}
\label{app:perlayer-kernel-error}

The per-layer RoPE-logit MAE between the original key $\bk$ and its
quantized reconstruction $\hat\bk$, averaged over all KV heads at the
layer, is
\[
\mathrm{MAE}_\ell
\;=\;
\mathbb{E}_{h\in\mathcal{H}_\ell^{\mathrm{KV}}}\;
\mathbb{E}_{g\in G_\ell(h)}\;
\mathbb{E}_{(\bq_{\ell,g},\bk_{\ell,h})\sim\mathcal{T}_\ell}\;
\mathbb{E}_{\Delta\in\mathcal{D}}
\left|
\bq_{\ell,g}^\top R_\Delta \bk_{\ell,h}
- \bq_{\ell,g}^\top R_\Delta \hat\bk_{\ell,h}
\right|,
\]
where $\bq_{\ell,g}$ and $\bk_{\ell,h}$ are pre-RoPE query and key
activations at layer $\ell$, query head $g$, and KV head $h$
(the analytic block-diagonal rotation $R_\Delta$ is applied identically
to clean and quantized keys); $G_\ell(h)$ is the set of query heads served by KV head
$h$ (for the DS-V2-Lite MLA, $H_{\mathrm{KV}}=1$ and $\bk_{\ell,h}$ is the
single shared decoupled RoPE-key; for the partial-rotary GLM-4, $R_\Delta$
rotates only the first $64$ of the $128$ key dimensions and is the identity on
the remaining $64$, which therefore contribute a static, offset-independent
term); and $\mathcal{D}$ is a grid of $50$ evenly spaced relative offsets in
$[-1024, 1024]$. Architectural specifics for non-standard projections
(Qwen3's QK-RMSNorm, GLM-4's fused QKV) are described in
Appendix~\ref{app:attention-model-panel}. We compute
$\mathrm{MAE}_\ell$ independently for every (model, layer) pair under a
$K$-only setting ($V$ is unchanged). Block-GTQ's frequency-block
energy scores are fit on the first $2048$ tokens of the WikiText-2
\emph{train} split; $\mathrm{MAE}_\ell$ is then evaluated on the
first $2048$ tokens of the WikiText-2 \emph{test} split (TQ-MSE is data-free
and needs no fit). Table~\ref{tab:exp-perlayer-kernel-error} reports
$\mathrm{MAE}_\ell$ at the $3$\,b/dim budget;
Table~\ref{tab:app-perlayer-kernel-error-2b} repeats it at $2$\,b/dim, where
Block-GTQ again wins all $367/367$ layer comparisons with comparable relative
reductions (absolute MAE rises at the tighter budget for both methods).

\begin{table*}[!ht]
  \centering
  \caption{Per-layer RoPE-logit error at the $2$ b/dim budget, K-only
  (appendix counterpart of Table~\ref{tab:exp-perlayer-kernel-error},
  which is at $3$ b/dim). Values are mean RoPE-logit MAE across model layers;
  lower is better. $\Delta$ is the relative reduction versus TQ-MSE; ``Wins''
  counts layers where Block-GTQ beats uniform TQ-MSE.}
  \label{tab:app-perlayer-kernel-error-2b}
  \small
  \setlength{\tabcolsep}{4pt}
  \scriptsize{\begin{tabular}{@{}c@{\hspace{1.2em}}c@{}}
  \begin{tabular}{l r r r c}
  \toprule
  Model & TQ-MSE & Block-GTQ & $\Delta$ & Wins \\
  \midrule
  Qwen2.5-3B       & 13.25 & \textbf{6.30}  & $+52.5\%$ & 36/36 \\
  Qwen2.5-14B      & 8.20  & \textbf{5.15}  & $+37.2\%$ & 48/48 \\
  Qwen3-8B         & 11.46 & \textbf{5.88}  & $+48.7\%$ & 36/36 \\
  Qwen3-30B-A3B    & 12.97 & \textbf{5.89}  & $+54.6\%$ & 48/48 \\
  Llama-3.1-8B     & 7.49  & \textbf{5.02}  & $+33.0\%$ & 32/32 \\
  \bottomrule
  \end{tabular}
  &
  \begin{tabular}{l r r r c}
  \toprule
  Model & TQ-MSE & Block-GTQ & $\Delta$ & Wins \\
  \midrule
  DS-R1-Llama-8B   & 6.77  & \textbf{4.54}  & $+33.0\%$ & 32/32 \\
  DS-R1-Qwen-7B    & 25.20 & \textbf{5.01}  & $+80.1\%$ & 28/28 \\
  Mistral-Nemo-12B & 6.75  & \textbf{4.42}  & $+34.5\%$ & 40/40 \\
  GLM-4-9B         & 16.31 & \textbf{9.90}  & $+39.3\%$ & 40/40 \\
  DS-V2-Lite       & 11.31 & \textbf{7.25}  & $+35.9\%$ & 27/27 \\
  \bottomrule
  \end{tabular}
  \end{tabular}}
\end{table*}
\FloatBarrier

\subsection{Attention Diagnostics across Models}
\label{app:attention-metrics}

\paragraph{Test protocol.}
Test contexts are drawn from the held-out WikiText-2 test split. We
forward the first $2048$ tokens through the model as a single
long-context sequence, and collect pre-RoPE Q/K at every transformer
layer. Attention metrics are then computed via this process: each query position $t \in \{1025, \ldots, 2048\}$ attends to its full causal
prefix $\{1, \ldots, t-1\}$, with RoPE attention logits
$s_{t,i} = \bq_t^\top R_{t-i} \bk_i$ formed analytically (the same rotation
$R_{t-i}$ is applied to clean and quantized keys). Each (model, method,
bit rate) cell is averaged over all $1024$ query positions. 

We report  the no-buffer setting, where every cached key is read from its
quantized representation.  An fp16 recent-key buffer leaves the most-recent keys exact
for every method; since attention places an outsized share of its mass
on recent positions, a buffered comparison reflects that shared fp16 region
more than the quantizer under test. We
therefore isolate the K-quantizer with no buffer and represent KIVI by its buffer-free ScaleOnly variant (Appendix~\ref{para:kivi-scaleonly-note}).

\paragraph{Calibration.}
The calibration sample
$(\bq_{\mathrm{cal}}, \bk_{\mathrm{cal}})$ is drawn from a $2048$-token
WikiText-2 train prompt. Each quantizer uses this sample differently:
KIVI fits its initial per-channel scale on $\bk_{\mathrm{cal}}$ in the no-buffer setting (see Appendix~\ref{para:kivi-scaleonly-note}); TQ-MSE is
data-free; Block-GTQ computes the per-block energy
score from $(\bq_{\mathrm{cal}}, \bk_{\mathrm{cal}})$, then derives the
per-block bit allocation and the same-rate group codebooks.

\paragraph{Metrics.}
For a query $\bq \in \R^d$ and original/quantized context-key matrices
$K, \hat K \in \R^{\ctxl \times d}$, let $s, \hat s \in \R^\ctxl$ be the
original and quantized RoPE attention logit rows and
$p = \softmax(s/\sqrt d)$, $\hat p = \softmax(\hat s/\sqrt d)$. We report
two diagnostics:
\[
\text{Softmax KL} = \E\,\mathrm{KL}(p\,\|\,\hat p),
\qquad
\text{Top-}10\text{ overlap} = \E\!\left[\frac{|\operatorname{top}_{10}(s) \cap \operatorname{top}_{10}(\hat s)|}{10}\right].
\]
Softmax KL is the divergence between the quantized and fp16 softmax
distributions; because KL weights each token by its fp16 attention mass,
errors at high-attention tokens dominate. Top-$10$ attended-token
overlap reports the fraction of fp16's ten most-attended tokens that
the quantized version also ranks in its top-$10$.

\begin{table*}[!ht]
\centering
\scriptsize
\caption{\textbf{Per-model softmax KL} ($\downarrow$, lower is better).
Per-model values behind the panel-mean bars in
Figure~\ref{fig:exp-attn-dual-kl}(a), with columns grouped by the
$2$, $3$, and $4$ b/dim budgets. KIVI refers to the no-buffer
KIVI-ScaleOnly variant (Appendix~\ref{para:kivi-scaleonly-note}). Best
per (model, budget) in \textbf{bold}; the last row is the panel mean.}
\label{tab:app-permodel-sm_kl}
\setlength{\tabcolsep}{3pt}
\begin{tabular}{l rrr c rrr c rrr}
\toprule
& \multicolumn{3}{c}{$2$ b/dim} & & \multicolumn{3}{c}{$3$ b/dim} & & \multicolumn{3}{c}{$4$ b/dim} \\
\cmidrule(lr){2-4}\cmidrule(lr){6-8}\cmidrule(lr){10-12}
Model & Block-GTQ & KIVI & TQ-MSE & & Block-GTQ & KIVI & TQ-MSE & & Block-GTQ & KIVI & TQ-MSE \\
\midrule
Qwen2.5-3B       & \textbf{0.1636} & 0.3114 & 0.6359 & & \textbf{0.0444} & 0.0989 & 0.2544 & & \textbf{0.0121} & 0.0680 & 0.0859 \\
Qwen2.5-14B      & \textbf{0.0773} & 0.1596 & 0.2118 & & \textbf{0.0214} & 0.0558 & 0.0567 & & \textbf{0.0056} & 0.0405 & 0.0153 \\
Qwen3-8B         & \textbf{0.1210} & 0.5126 & 0.6349 & & \textbf{0.0327} & 0.3594 & 0.1960 & & \textbf{0.0091} & 0.3338 & 0.0594 \\
Qwen3-30B        & \textbf{0.1234} & 0.2020 & 0.8097 & & \textbf{0.0335} & 0.0696 & 0.2288 & & \textbf{0.0088} & 0.0490 & 0.0571 \\
Llama-3.1-8B     & \textbf{0.0569} & 0.1298 & 0.1444 & & \textbf{0.0151} & 0.0614 & 0.0367 & & \textbf{0.0041} & 0.0510 & 0.0098 \\
DS-R1-Llama-8B   & \textbf{0.0378} & 0.1256 & 0.1024 & & \textbf{0.0099} & 0.0663 & 0.0245 & & \textbf{0.0026} & 0.0584 & 0.0063 \\
DS-R1-Qwen-7B    & \textbf{0.1229} & 0.7481 & 0.8782 & & \textbf{0.0271} & 0.5569 & 0.4840 & & \textbf{0.0104} & 0.5326 & 0.2915 \\
Mistral-Nemo-12B & \textbf{0.0515} & 1.9675 & 0.1288 & & \textbf{0.0140} & 1.8197 & 0.0332 & & \textbf{0.0035} & 1.7932 & 0.0088 \\
GLM-4-9B         & 0.4546 & \textbf{0.3492} & 0.9397 & & 0.1617 & \textbf{0.1188} & 0.3450 & & \textbf{0.0546} & 0.0829 & 0.1072 \\
DS-V2-Lite       & 0.5709 & \textbf{0.2023} & 1.3615 & & 0.1799 & \textbf{0.0684} & 0.4381 & & 0.0548 & \textbf{0.0435} & 0.1274 \\
\midrule
\textit{Mean}    & \textbf{0.1780} & 0.4708 & 0.5847 & & \textbf{0.0540} & 0.3275 & 0.2097 & & \textbf{0.0166} & 0.3053 & 0.0769 \\
\bottomrule
\end{tabular}
\end{table*}

\begin{table*}[!ht]
\centering
\scriptsize
\caption{\textbf{Per-model top-$10$ attended-token overlap} ($\uparrow$,
higher is better). Per-model values behind the scatter in
Figure~\ref{fig:exp-attn-dual-kl}(b), with columns grouped by the
$2$, $3$, and $4$ b/dim budgets. Best per (model, budget) in
\textbf{bold}; the last row is the panel mean.}
\label{tab:app-permodel-top10}
\setlength{\tabcolsep}{3pt}
\begin{tabular}{l rrr c rrr c rrr}
\toprule
& \multicolumn{3}{c}{$2$ b/dim} & & \multicolumn{3}{c}{$3$ b/dim} & & \multicolumn{3}{c}{$4$ b/dim} \\
\cmidrule(lr){2-4}\cmidrule(lr){6-8}\cmidrule(lr){10-12}
Model & Block-GTQ & KIVI & TQ-MSE & & Block-GTQ & KIVI & TQ-MSE & & Block-GTQ & KIVI & TQ-MSE \\
\midrule
Qwen2.5-3B       & \textbf{75.2\%} & 71.6\% & 62.1\% & & \textbf{86.1\%} & 82.1\% & 76.8\% & & \textbf{92.5\%} & 85.7\% & 86.4\% \\
Qwen2.5-14B      & \textbf{75.7\%} & 72.3\% & 64.8\% & & \textbf{86.4\%} & 82.4\% & 79.3\% & & \textbf{92.6\%} & 85.6\% & 88.3\% \\
Qwen3-8B         & \textbf{76.3\%} & 69.4\% & 61.6\% & & \textbf{86.7\%} & 78.5\% & 77.2\% & & \textbf{92.6\%} & 81.1\% & 86.8\% \\
Qwen3-30B        & \textbf{74.4\%} & 72.6\% & 55.2\% & & \textbf{85.5\%} & 82.5\% & 72.5\% & & \textbf{92.0\%} & 85.5\% & 84.1\% \\
Llama-3.1-8B     & 77.1\% & \textbf{77.2\%} & 68.3\% & & \textbf{87.2\%} & 85.0\% & 81.4\% & & \textbf{92.9\%} & 87.3\% & 89.5\% \\
DS-R1-Llama-8B   & \textbf{77.1\%} & 76.3\% & 68.2\% & & \textbf{87.2\%} & 84.6\% & 81.5\% & & \textbf{93.0\%} & 87.0\% & 89.6\% \\
DS-R1-Qwen-7B    & \textbf{81.4\%} & 72.9\% & 66.5\% & & \textbf{89.7\%} & 82.3\% & 79.4\% & & \textbf{94.2\%} & 85.1\% & 87.2\% \\
Mistral-Nemo-12B & \textbf{80.5\%} & 69.5\% & 72.5\% & & \textbf{89.1\%} & 76.9\% & 84.2\% & & \textbf{94.2\%} & 78.8\% & 91.2\% \\
GLM-4-9B         & 60.2\% & \textbf{61.7\%} & 47.0\% & & \textbf{76.1\%} & 75.8\% & 66.0\% & & \textbf{86.0\%} & 80.8\% & 79.6\% \\
DS-V2-Lite       & 49.9\% & \textbf{68.0\%} & 33.9\% & & 68.9\% & \textbf{81.2\%} & 55.4\% & & 81.6\% & \textbf{85.9\%} & 73.3\% \\
\midrule
\textit{Mean}    & \textbf{72.8\%} & 71.2\% & 60.0\% & & \textbf{84.3\%} & 81.1\% & 75.4\% & & \textbf{91.2\%} & 84.3\% & 85.6\% \\
\bottomrule
\end{tabular}
\end{table*}

\paragraph{Cross-method results.}
Per-model numbers behind Figure~\ref{fig:exp-attn-dual-kl} are in
Tables~\ref{tab:app-permodel-sm_kl} and~\ref{tab:app-permodel-top10}.
Block-GTQ has the lowest mean softmax KL at every budget and jointly
wins both axes---lowest softmax KL and highest top-$10$ overlap---on
$7/10$ models at $2$ b/dim, $8/10$ at $3$ b/dim, and $9/10$ at $4$
b/dim. Relative to TQ-MSE, panel-mean softmax KL drops by
$3.28\times\,/\,3.88\times\,/\,4.63\times$ and panel-mean top-$10$
overlap rises by $12.8\,/\,8.9\,/\,5.6$ percentage points at the three
budgets. The advantage widens with the bit budget: as more bits become
available, the non-uniform allocator routes incremental bandwidth to
high-energy RoPE blocks that uniform-rate baselines cannot exploit; at
the tight $2$-bit budget all three methods absorb relatively similar
quantization noise, so the gap is the smallest.

\paragraph{Where KIVI is competitive.}
Block-GTQ beats TQ-MSE on every (model, bit-budget) cell---on both
softmax KL and top-$10$---and beats KIVI-ScaleOnly on eight of the ten
panel models. The remaining two, DS-V2-Lite and GLM-4-9B, are the
architectures whose RoPE substructure is half-width: both leave the
allocator only $32$ RoPE-carrying frequency blocks, half the $64$ of
the standard GQA models. DS-V2-Lite uses MLA, whose single shared
decoupled RoPE-key is $64$-dimensional ($32$ blocks total); GLM-4-9B
is partial-rotary, rotating only the first $64$ of its $128$ key
dimensions, so only $32$ of its $64$ blocks carry RoPE-frequency
structure. With half the structure to differentiate, Block-GTQ's
RoPE-aware advantage shrinks and per-channel KIVI-ScaleOnly becomes
competitive. The effect is decisive on DS-V2-Lite---KIVI wins both
axes at every budget, by a wide margin at $2$ b/dim (top-$10$
$68.0\%$ vs $49.9\%$)---but only partial on GLM-4-9B, where KIVI
wins both axes at $2$ b/dim and softmax KL at $3$ b/dim before
Block-GTQ recovers both by $4$ b/dim. We attribute the persistence on
DS-V2-Lite to its MLA geometry: the single decoupled RoPE-key is
consumed by all $16$ query heads, so its quantization error is shared
layer-wide and the allocator has only those $32$ blocks to work with;
GLM-4-9B instead keeps a full $128$-dimensional key, so once the
budget loosens the allocator can spend the extra bandwidth on its
non-rotary half and recover.

\paragraph{Fair-comparison note on KIVI.}\phantomsection\label{para:kivi-scaleonly-note}
KIVI as originally proposed ships with a $32$-token fp16 residual buffer
as an integral part of the method---every cached key passes through this
buffer before being quantized. Our diagnostic uses a custom KIVI-ScaleOnly
variant that retains KIVI's per-channel rolling-scale quantizer but
removes the residual buffer. KIVI-ScaleOnly is therefore not a deployment
configuration; it exists only to make the K-quantizer comparable across
methods.

The above describes only the K side of KIVI-ScaleOnly. In the attention
diagnostics (Section~\ref{sec:exp-attn-diag}), V stays fp16, so this
K-only variant is used directly. In the downstream tasks
(Section~\ref{sec:exp-downstream}: NIAH, LongBench, AIME) at
K3V3/K3V2/K2V2 budgets, Block-GTQ, TQ-MSE, and KIVI-ScaleOnly share
the same V quantizer (TQ-MSE); the K quantizer is where they differ.

\FloatBarrier

\section{Calibration Robustness}
\label{app:calibration}

Block-GTQ's bit allocation is computed from a per-RoPE-block energy
score over a short calibration prefix;
Algorithm~\ref{alg:blockgtq-calibration} states the calibration
procedure and Equation~\ref{eq:app-gqa-energy} below gives the
GQA-aware score formula. This appendix quantifies the sensitivity of
the resulting allocation to three calibration choices: prefix
length, score function, and calibration corpus.

\begin{algorithm}[!h]
\caption{RoPE-block score calibration}
\label{alg:blockgtq-calibration}
\begin{algorithmic}[1]
\REQUIRE Model $\mathcal M$, calibration tokens $X$
\ENSURE Score vectors $\{s_{\ell,h,i}\}$ for every layer $\ell$, KV head $h$, and RoPE block $i$
\STATE Run $\mathcal M$ on $X$ and capture Q/K vectors used by RoPE attention
\FOR{each layer $\ell$}
  \FOR{each KV head $h$}
    \STATE Identify the query-head group $G(h)$ served by KV head $h$
    \STATE Split each captured query and key head into RoPE blocks $i=1,\ldots,L$
    \FOR{each RoPE block $i$}
      \STATE Average $\|\bq_{\ell,g,t}^{(i)}\|_2^2$ over tokens $t$ and query heads $g\in G(h)$
      \STATE Average $\|\bk_{\ell,h,t}^{(i)}\|_2^2$ over tokens $t$
      \STATE Set $s_{\ell,h,i}$ by Equation~\ref{eq:app-gqa-energy}
    \ENDFOR
  \ENDFOR
\ENDFOR
\end{algorithmic}
\end{algorithm}

\paragraph{GQA energy formula.}
\label{app:calib-gqa}
Under grouped-query attention (GQA), one KV head is shared by
multiple query heads. With $G_\ell(h)$ denoting the layer-specific
query-head group from Subsection~\ref{sec:scoring} and $N$ the
number of calibration tokens, the Block-GTQ score $s_{\ell,h,i}$ for
layer $\ell$, KV head $h$, and RoPE block $i$ is
\begin{equation}
\label{eq:app-gqa-energy}
s_{\ell,h,i}
=
\frac{1}{2}
\left(
\frac{1}{N|G_\ell(h)|}
\sum_{t=1}^{N}\sum_{g\in G_\ell(h)}
\left\|\bq_{\ell,g,t}^{(i)}\right\|^2
+
\frac{1}{N}
\sum_{t=1}^{N}
\left\|\bk_{\ell,h,t}^{(i)}\right\|^2
\right).
\end{equation}
The Q-side term averages squared norms over the query heads served
by the KV head. Averaging the Q vectors over heads \emph{before}
squaring would yield a strictly smaller value by Jensen's
inequality (with equality only when the heads are collinear), and
would therefore systematically under-count the Q-side energy.

\FloatBarrier

\subsection{Calibration length ablation}
\label{app:calib-b1}

K2V2 is sensitive to the calibration length
(Table~\ref{tab:abl-b1-pertask}): the curve is
non-monotone---$N=64$ ($95.68$) beats $N=128$, $256$, and $1024$, and
only $N=2048$ wins cleanly ($97.36$); the per-task breakdown
(same table) shows multi-query alone
swinging $12$ pp peak-to-trough across the smaller budgets ($74.24$
at $N=1024$ to $86.70$ at $N=2048$), with the binary subtasks staying
$\ge\!91.92$. K3V3 is much less sensitive---every $N$ lies within
$1.07$ pp of $N=2048$, and even m-query stays within $3.20$ pp of the
$N=2048$ baseline. The K2V2 non-monotonicity comes from
finite-sample noise in the per-block energy estimates: small
$N_{\mathrm{cal}}$ flips roughly five of $64$ marginal-gain
comparisons per head, damped out only at $N=2048$.

\begin{table*}[!h]
  \centering
  \small
  \caption{Calibration length ablation, per-task NIAH pass-rate (\%)
    on Llama-3.1-8B-Instruct. $\Delta_{2048}$ is the change in
    Overall vs $N=2048$. At K2V2 the budget-noise effect concentrates
    on the fractional-scored subtasks, with m-query swinging $12$ pp
    peak-to-trough ($74.24$ at $N=1024$ vs.\ $86.70$ at $N=2048$);
    binary subtasks stay $\ge\!91.92$. At K3V3 every subtask is within
    $\sim\!3.54$ pp of the $N=2048$ baseline.}
  \label{tab:abl-b1-pertask}
  \setlength{\tabcolsep}{4pt}
  \begin{tabular}{llcccccccr}
    \toprule
    Budget & $N_{\mathrm{cal}}$ & single & dist.\ & multi & m-key & m-value & m-query & Overall & $\Delta_{2048}$ \\
    \midrule
    K2V2 & \phantom{0}64   & 100.00 &  96.46 &  97.98 & 95.29 &  98.48 & 85.86 & 95.68 & $-1.68$ \\
    K2V2 & \phantom{0}128  & 100.00 &  98.48 &  97.47 & 95.79 &  96.97 & 76.94 & 94.28 & $-3.08$ \\
    K2V2 & \phantom{0}256  & 100.00 &  96.46 &  91.92 & 95.96 &  99.16 & 80.98 & 94.08 & $-3.28$ \\
    K2V2 & \phantom{0}512  & 100.00 &  97.98 &  96.46 & 97.47 &  99.49 & 83.33 & 95.79 & $-1.57$ \\
    K2V2 & 1024            & 100.00 &  98.48 &  94.95 & 92.59 &  99.33 & 74.24 & 93.27 & $-4.09$ \\
    K2V2 & \textbf{2048}   & 100.00 &  98.99 & 100.00 & 98.48 & 100.00 & 86.70 & \textbf{97.36} & (base) \\
    \midrule
    K3V3 & \phantom{0}64   & 100.00 & 100.00 & 100.00 & 97.98 & 100.00 & 92.76 & 98.46 & $+0.06$ \\
    K3V3 & \phantom{0}128  & 100.00 & 100.00 & 100.00 & 100.00 & 100.00 & 91.75 & 98.63 & $+0.23$ \\
    K3V3 & \phantom{0}256  & 100.00 & 100.00 & 100.00 & 96.46 & 100.00 & 87.54 & 97.33 & $-1.07$ \\
    K3V3 & \phantom{0}512  & 100.00 & 100.00 &  99.49 & 96.97 & 100.00 & 91.58 & 98.01 & $-0.39$ \\
    K3V3 & 1024            & 100.00 & 100.00 & 100.00 & 99.49 & 100.00 & 90.24 & 98.29 & $-0.11$ \\
    K3V3 & \textbf{2048}   & 100.00 & 100.00 & 100.00 & 100.00 & 99.66 & 90.74 & \textbf{98.40} & (base) \\
    \bottomrule
  \end{tabular}
\end{table*}

\FloatBarrier

\subsection{Energy score ablation}
\label{app:calib-b2}

We compare five energy score functions on Block-GTQ---the default
$\texttt{qk\_avg}$ (Eq.~\ref{eq:app-gqa-energy}) and four
alternatives spanning symmetric aggregations and single-sided
variants:
\begin{align*}
\texttt{qk\_avg} \ \, &= \tfrac{1}{2}\!\left(\mathbb{E}\|\bq\|^2 + \mathbb{E}\|\bk\|^2\right), \\
\texttt{qk\_max} \ \, &= \max\!\left(\mathbb{E}\|\bq\|^2,\, \mathbb{E}\|\bk\|^2\right), \\
\texttt{qk\_product} &= \sqrt{\mathbb{E}\|\bq\|^2 \cdot \mathbb{E}\|\bk\|^2}, \\
\texttt{k\_only} \ \, &= \mathbb{E}\|\bk\|^2, \\
\texttt{q\_only} \ \, &= \mathbb{E}\|\bq\|^2.
\end{align*}
$\texttt{qk\_max}$ is another symmetric aggregator (the larger of
the two squared norms); $\texttt{qk\_product}$ is their geometric
mean; the two single-sided variants $\texttt{k\_only}$ and
$\texttt{q\_only}$ drop one side of attention entirely and pin down
which side carries the signal. All five variants share the same calibration---the first
$2048$ tokens of the WikiText-2 test split---and the same Block-GTQ
allocator; we run NIAH on Llama-3.1-8B-Instruct at the
rate-sensitive K2V2 budget, where the score choice is most
consequential (Table~\ref{tab:abl-b2}). The symmetric default $\texttt{qk\_avg}$ wins by Overall ($97.36$) and
on most per-task columns.

\begin{table}[!h]
  \centering
  \small
  \caption{Energy score ablation. Block-GTQ per-task NIAH pass-rate
    (\%) on Llama-3.1-8B-Instruct at K2V2; per-column best in bold.}
  \label{tab:abl-b2}
  \begin{tabular}{lccccccc}
    \toprule
    Variant & single & distract.\ & multi & m-key & m-value & m-query & \textbf{Overall} \\
    \midrule
    \texttt{qk\_avg} (default) & \textbf{100.00} & 98.99 & \textbf{100.00} & \textbf{98.48} & \textbf{100.00} & 86.70 & \textbf{97.36} \\
    \texttt{qk\_max}     &  98.99 & 98.48 &  98.99 & 98.15 &  98.32 & \textbf{87.71} & 96.77 \\
    \texttt{k\_only}     &  99.49 & 97.47 &  99.49 & 95.79 &  99.49 & 79.97 & 95.29 \\
    \texttt{qk\_product} & \textbf{100.00} & 97.98 &  99.49 & 93.10 &  98.48 & 81.65 & 95.12 \\
    \texttt{q\_only}     & \textbf{100.00} & \textbf{99.49} &  98.99 & 94.95 &  97.31 & 73.57 & 94.05 \\
    \bottomrule
  \end{tabular}
\end{table}

\FloatBarrier

\subsection{Calibration corpus ablation}
\label{app:calib-c1}

To test whether the per-block energy ranking is sensitive to the
calibration corpus, we compare four $2048$-token calibration sources
and re-evaluate Block-GTQ NIAH on Llama-3.1-8B-Instruct (Table~\ref{tab:abl-c1-pertask}):
\begin{itemize}
  \setlength{\itemsep}{2pt}
  \item \emph{WikiText-2 test} (baseline): curated Wikipedia prose;
    first $2048$ tokens of the WikiText-2 test split.
  \item \emph{PG19}: Project Gutenberg literary text in a
    comparatively older English register; first $2048$ tokens of
    the HuggingFace \texttt{pg19} train split.
  \item \emph{C4}: heterogeneous web text from Common Crawl---prose
    interleaved with boilerplate, URLs, and navigation fragments;
    first $2048$ tokens of the HuggingFace \texttt{c4} \texttt{en}
    validation split.
  \item \emph{Code}: Python source code from the CPython 3.11.0
    standard library---first $2048$ tokens of a concatenation of
    \texttt{argparse.py}, \texttt{json/encoder.py}, and
    \texttt{json/decoder.py} fetched from the cpython GitHub
    repository.
\end{itemize}

NIAH evaluates retrieval over long passages of natural English prose
(``haystacks'') with short factual ``needles'' inserted at varying
depths. The four corpora span an ordered range of distance from
this deployment distribution: WikiText-2 and PG19 are both natural
prose (closest); C4 is mostly prose interleaved with web artifacts;
code is structurally different in both surface form and
distributional statistics (furthest). K2V2 is sensitive to that distance (Table~\ref{tab:abl-c1-pertask}): PG19
stays within $0.28$ pp of WikiText-2, C4 drops $2.78$ pp, and code
drops $3.70$ pp Overall---monotonic in how far the calibration
diverges from prose. K3V3 is much less sensitive; all four corpora
are within $0.34$ pp of one another, $11$--$16\times$ tighter than at
K2V2. 

This separation reflects the $4^{-b}$ rate law in Block-GTQ's
allocator objective $\sum_i s_i \cdot 4^{-b_i}$. When an off-domain
calibration shifts the per-block energy ranking, the allocator
misplaces some bits; the cost of each misplacement (e.g., assigning
$b$ where $b{+}1$ would have been better) is
$s_i\,(4^{-b} - 4^{-(b+1)})$, which at K2V2 (average $b{=}2$) is
$s_i (4^{-2} - 4^{-3})$ and at K3V3 (average $b{=}3$) only
$s_i (4^{-3} - 4^{-4})$---a factor of $\sim\!4$ smaller per
misplaced bit. The same calibration-induced ranking shift therefore
translates into a $\sim\!4\times$ smaller objective penalty at K3V3,
and the observed $11$--$16\times$ NIAH swing reduction is the
downstream manifestation of this rate-law amortization.

\begin{table*}[!h]
  \centering
  \small
  \caption{Calibration corpus ablation, per-task NIAH pass-rate
    (\%) on Llama-3.1-8B-Instruct. $\Delta$ is the change in
    Overall vs the WT2 baseline..}
  \label{tab:abl-c1-pertask}
  \setlength{\tabcolsep}{4pt}
  \begin{tabular}{llcccccccr}
    \toprule
    Budget & Corpus & single & dist.\ & multi & m-key & m-value & m-query & Overall & $\Delta$ \\
    \midrule
    K2V2 & WT2 (baseline) & 100.00 & 98.99  & 100.00 & 98.48 & 100.00 & 86.70 & 97.36 & --- \\
    K2V2 & PG19           &  99.49 & 97.47  & 100.00 & 98.65 &  99.83 & 87.04 & 97.08 & $-0.28$ \\
    K2V2 & C4             & 100.00 & 98.99  & 100.00 & 90.40 &  98.82 & 79.29 & 94.58 & $-2.78$ \\
    K2V2 & code           &  98.99 & 97.47  &  96.46 & 94.28 &  95.96 & 78.79 & 93.66 & $-3.70$ \\
    \midrule
    K3V3 & WT2 (baseline) & 100.00 & 100.00 & 100.00 & 100.00 & 99.66 & 90.74 & 98.40 & --- \\
    K3V3 & PG19           & 100.00 &  98.99 & 100.00 &  98.32 & 100.00 & 91.41 & 98.12 & $-0.28$ \\
    K3V3 & C4             & 100.00 & 100.00 & 100.00 &  98.99 &  99.66 & 90.74 & 98.23 & $-0.17$ \\
    K3V3 & code           & 100.00 & 100.00 & 100.00 &  99.49 & 100.00 & 88.89 & 98.06 & $-0.34$ \\
    \bottomrule
  \end{tabular}
\end{table*}

\FloatBarrier

\subsection{Cross-model PPL and allocation-distance diagnostics}
\label{app:calib-ppl}
\label{sec:exp-calibration}

We run Block-GTQ at $N_{\mathrm{cal}}\in\{128,512,2048\}$ on
Llama-3.1-8B-Instruct and DeepSeek-R1-Distill-Qwen-7B at both K3V3
and K2V2. For each cell we draw three calibration prefixes from
WikiText-2 train at offsets $0$, $10{,}000$, and $20{,}000$ tokens
(different articles, three near-independent draws). We evaluate
using four metrics; (ii)--(iv) compare each perturbed allocation
$\mathbf b$ against the $N_{\mathrm{cal}}=2048$, seed-$0$ reference
$\mathbf b^{\mathrm{ref}}$ over all (layer, KV head, RoPE-block)
triples $\mathcal I$:
\begin{itemize}
  \setlength{\itemsep}{4pt}
  \item \emph{(i) Output PPL}: sliding-window perplexity on the
    full WikiText-2 test set ($C=4096$, $S=512$, $\sim\!99$K
    tokens), reported in main-text
    Table~\ref{tab:exp-calib-budget-ppl}. PPL captures
    robustness at the \emph{output} level but cannot tell whether
    the allocation itself is stable or whether it moves with small
    objective cost---hence (ii)--(iv) below.
  \item \emph{(ii) Hamming distance}, the fraction of slots whose
    bit value changed:
    \begin{equation}
    \label{eq:app-calib-hamming}
    \mathrm{Hamm}(\mathbf b,\mathbf b^{\mathrm{ref}})
    = \frac{1}{|\mathcal I|} \sum_{(\ell,h,i)\in\mathcal I}
    \mathbf 1\!\left[b_{\ell,h,i}\ne b^{\mathrm{ref}}_{\ell,h,i}\right] .
    \end{equation}
  \item \emph{(iii) High-bit Jaccard at threshold $4$}, measuring
    the overlap of the slots the allocator protected with
    $\ge\!4$ bits:
    \begin{equation}
    \label{eq:app-calib-highbit}
    \mathrm{HB@4}(\mathbf b,\mathbf b^{\mathrm{ref}})
    =
    \frac{
    \left|\{(\ell,h,i):b_{\ell,h,i}\ge4\}
    \cap
    \{(\ell,h,i):b^{\mathrm{ref}}_{\ell,h,i}\ge4\}\right|
    }{
    \left|\{(\ell,h,i):b_{\ell,h,i}\ge4\}
    \cup
    \{(\ell,h,i):b^{\mathrm{ref}}_{\ell,h,i}\ge4\}\right|
    } .
    \end{equation}
  \item \emph{(iv) Energy-weighted regret}, the cost in the
    allocator's own objective with each change weighted by
    importance $s^{\mathrm{ref}}_i$ and bit magnitude $4^{-b}$:
    \begin{equation}
    \label{eq:app-calib-regret}
    \mathrm{Regret}(\mathbf b)
    =
    \frac{
      \sum_{(\ell,h,i)\in\mathcal I}
      s^{\mathrm{ref}}_{\ell,h,i}\,(4^{-b_{\ell,h,i}} - 4^{-b^{\mathrm{ref}}_{\ell,h,i}})
    }{
      \sum_{(\ell,h,i)\in\mathcal I}
      s^{\mathrm{ref}}_{\ell,h,i}\,4^{-b^{\mathrm{ref}}_{\ell,h,i}}
    } .
    \end{equation}
\end{itemize}
The two non-reference $N=2048$ seed cells differ from the reference
only in the random WikiText-2 slice, so their disagreement with the
reference defines a within-source noise floor for each metric.

\begin{table}[!h]
\centering
\caption{Allocation distance against the
$N_{\mathrm{cal}}=2048$, seed-$0$ reference allocation, K3V3.
Hamming counts changed bit slots
(Eq.~\ref{eq:app-calib-hamming}); HB@4 is the high-bit Jaccard at
threshold $4$ (Eq.~\ref{eq:app-calib-highbit}); Regret is
Eq.~\ref{eq:app-calib-regret}. The $2048$-token non-reference
seeds define the within-source noise floor.}
\label{tab:app-calib-allocation}
\small
\begin{tabular}{l r r ccc}
\toprule
Model & $N_{\mathrm{cal}}$ & seed & Hamming & HB@4 & Regret \\
\midrule
\multicolumn{6}{l}{\textbf{Llama-3.1-8B-Instruct (K3V3)}} \\
& \phantom{0}128  & 0 & 0.148 & 0.730 & $+2.99\%$ \\
& \phantom{0}128  & 1 & 0.124 & 0.776 & $+2.15\%$ \\
& \phantom{0}128  & 2 & 0.141 & 0.736 & $+2.78\%$ \\
& \phantom{0}512  & 0 & 0.077 & 0.845 & $+0.80\%$ \\
& \phantom{0}512  & 1 & 0.083 & 0.844 & $+0.97\%$ \\
& \phantom{0}512  & 2 & 0.089 & 0.831 & $+1.09\%$ \\
& 2048            & 0 & 0.000 & 1.000 & $+0.00\%$ \\
& 2048            & 1 & 0.081 & 0.855 & $+0.89\%$ \\
& 2048            & 2 & 0.074 & 0.870 & $+0.73\%$ \\
\midrule
\multicolumn{6}{l}{\textbf{DeepSeek-R1-Distill-Qwen-7B (K3V3)}} \\
& \phantom{0}128  & 0 & 0.141 & 0.827 & $+1.89\%$ \\
& \phantom{0}128  & 1 & 0.112 & 0.877 & $+1.22\%$ \\
& \phantom{0}128  & 2 & 0.178 & 0.802 & $+3.11\%$ \\
& \phantom{0}512  & 0 & 0.065 & 0.926 & $+0.45\%$ \\
& \phantom{0}512  & 1 & 0.084 & 0.915 & $+0.67\%$ \\
& \phantom{0}512  & 2 & 0.084 & 0.912 & $+0.70\%$ \\
& 2048            & 0 & 0.000 & 1.000 & $+0.00\%$ \\
& 2048            & 1 & 0.073 & 0.918 & $+0.49\%$ \\
& 2048            & 2 & 0.066 & 0.931 & $+0.44\%$ \\
\bottomrule
\end{tabular}
\end{table}

At K3V3, the
within-source noise floor (two non-reference $N=2048$ seeds in
Table~\ref{tab:app-calib-allocation}) is Hamming $0.07$--$0.08$,
HB@4 $0.86$--$0.93$, regret $+0.4$--$0.9\%$ on both models. $N=128$
pushes Hamming to $1.4$--$2.7\times$ the floor and drops HB@4 by
$10$--$13$ pp (about $27\%$ of the high-bit tail reshuffled on
Llama), yet regret stays at $1.2$--$3.1\%$: the allocation visibly
moves above what calibration randomness alone explains. This is the
$4^{-b}$ rate law in action---each misplaced bit at $b=3$ costs
roughly $4\times$ less than at $b=2$, so the same allocator
movement amortizes into $\le\!3.1\%$ regret at K3V3.

\FloatBarrier

\section{Downstream Evaluation Details}
\label{app:downstream}

\subsection{Long-Context Tasks}
\label{app:long-context}

\subsubsection{NIAH Protocol}

The main text shows the Llama-3.1-8B-Instruct single-needle heatmap
(Figure~\ref{fig:exp-niah-llama-2row}) and the combined Llama / Qwen
multi-task scores aggregated across six NIAH variants
(Table~\ref{tab:exp-niah-multitask}). This appendix adds the matching
Qwen2.5-7B-Instruct heatmap (Figure~\ref{fig:app-niah-qwen-2row}) and
details the protocol and subtask definitions.

\begin{figure*}[!h]
\centering
\includegraphics[width=0.99\linewidth]{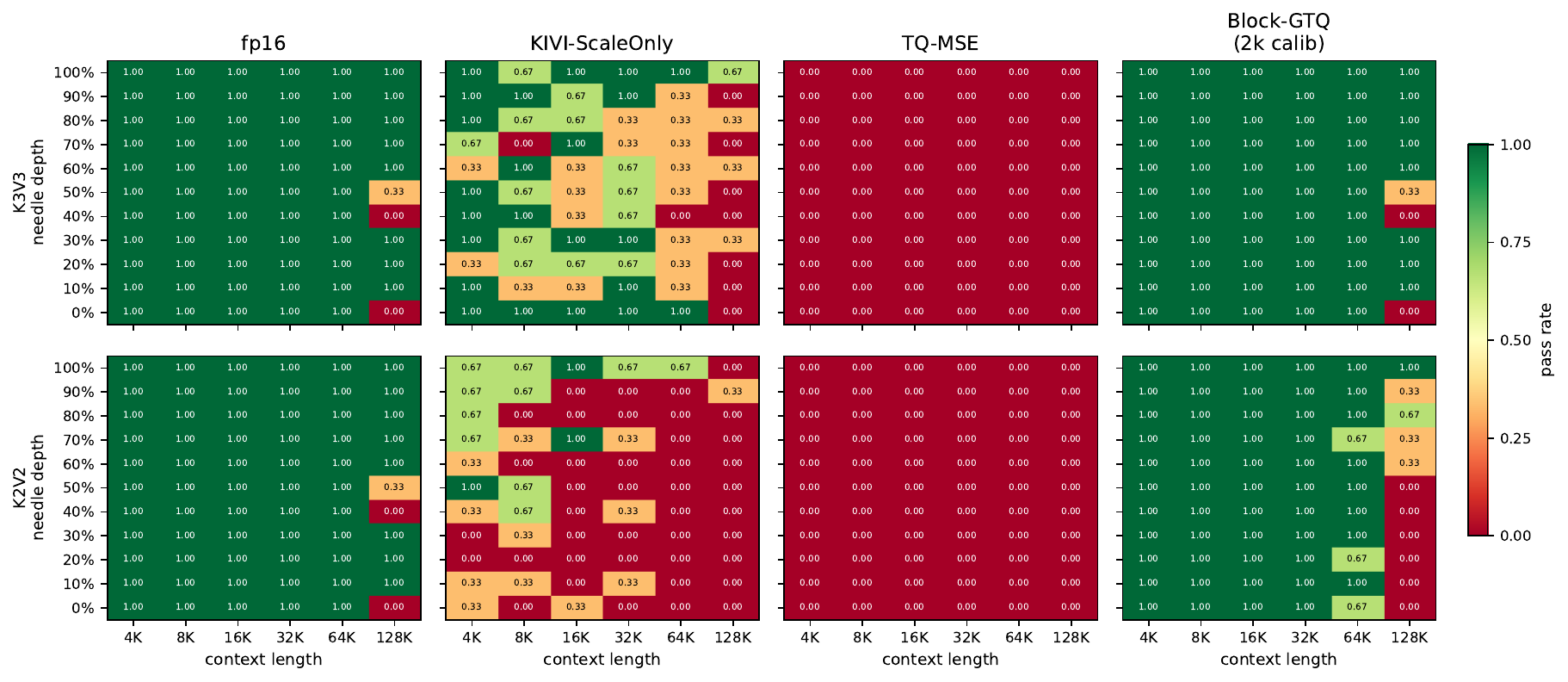}
\caption{\textbf{NIAH single-needle retrieval on Qwen2.5-7B-Instruct.}
Pass rate is shown over context length ($4$K--$128$K) and needle depth
($0\%$--$100\%$), averaged over three trials per cell. The two rows use the
same method layout as Figure~\ref{fig:exp-niah-llama-2row}:
\texttt{fp16}, \texttt{KIVI-ScaleOnly} (Appendix~\ref{para:kivi-scaleonly-note}),
\texttt{TQ-MSE}, and \texttt{Block-GTQ}. Top row: K3V3. Bottom row: K2V2. The fp16 panel is identical across budgets for a
fixed model and serves as the retrieval ceiling.}
\label{fig:app-niah-qwen-2row}
\end{figure*}

\paragraph{Calibration.} Block-GTQ is calibrated on the first $2048$
tokens of the WikiText-2 test split, concatenated as a single
contiguous raw-text stream with article headings stripped. TQ-MSE is
data-independent;
KIVI-ScaleOnly is defined in Appendix~\ref{para:kivi-scaleonly-note}.

\paragraph{Subtasks.} The six NIAH variants in
Table~\ref{tab:exp-niah-multitask} stress different facets of
long-context retrieval. Across all six, the haystack is filler text
into which one or more synthetic key--value \emph{needles} are
inserted; the model receives the haystack plus a query and must
return the matching value(s). Variants differ in needle count,
distractor structure, and query structure.
\begin{itemize}
  \setlength{\itemsep}{4pt}
  \item \textbf{single}: one key--value needle is inserted at a
  given depth and the model is queried for its value. Tests basic
  retrieval---the model must locate the needle by key and return
  the corresponding value.
  \item \textbf{distract.}: one target needle is inserted alongside
  several distractor key--value pairs with similar formatting (but
  unrelated to the query). Tests discrimination against same-format
  distractors: the model must not be misled by lookalike but
  incorrect needles.
  \item \textbf{multi}: several distinct needles are inserted in the
  haystack, and the model is queried for one specific value. Tests
  selective retrieval when multiple plausible candidates exist.
  \item \textbf{m-key}: three distinct key--value needles are placed
  close together in the haystack, and the model is queried for each
  key in turn. Tests fine-grained key discrimination among nearby
  needles---the model must not conflate adjacent key--value pairs.
  \item \textbf{m-value}: several values are bound to a single
  entity, and the query requires returning all of them. Tests recall
  completeness: partial answers are penalized.
  \item \textbf{m-query}: several distinct queries are run against a
  haystack holding multiple needles. Tests robustness across
  multi-fact recall---the score is averaged over all queries.
\end{itemize}
The first three tasks are scored $0/1$ (the model either returns
the correct value or not); the last three (\textbf{m-key},
\textbf{m-value}, \textbf{m-query}) are scored as the fraction of
correct answers among multiple expected responses.

\paragraph{Sampling.} Each (task, context length, depth) cell averages
three haystack samples (random filler text, fixed needles) over six
context lengths ($4$K--$128$K) and eleven needle depths
$\{0\%, 10\%, \ldots, 100\%\}$. All methods and bit budgets share the
same needle set, so cross-method and cross-budget comparisons are
paired on identical needle facts.

\FloatBarrier

\subsubsection{LongBench-EN Protocol}

The LongBench-EN table in Section~\ref{sec:exp-longbench} uses
Llama-3.1-8B-Instruct on eight subtasks spanning single-document
QA, multi-document QA, summarization, few-shot classification, synthetic
retrieval, and code completion.

\paragraph{Calibration.} Block-GTQ uses the same calibration as for
NIAH: the first $2048$ tokens of the WikiText-2 test split. TQ-MSE is
data-independent; KIVI-ScaleOnly is defined in
Appendix~\ref{para:kivi-scaleonly-note}.

\paragraph{Subtasks.} Each subtask is listed below with its column
abbreviation, scoring metric, and output-token cap. Metrics follow
LongBench~\citep{bai2024longbench}: \emph{QA-F1} is token-level F1
between predicted and reference answers; \emph{ROUGE-L} is
LCS-based ROUGE for summarization; \emph{classification score} and
\emph{retrieval score} are answer accuracy; \emph{edit similarity}
is edit-distance-based similarity for code. The \emph{output-token
cap} is the maximum number of tokens the model may generate per
example, set by LongBench per task.
\begin{itemize}
  \setlength{\itemsep}{4pt}
  \item \textbf{qasper} (\texttt{Qasp}; QA-F1, $128$-token cap):
  single-document QA over NLP research papers (arXiv NLP papers as
  input). Questions require extracting specific factual details from
  a paper-length input, often spanning multiple sections
  (methodology, results, related work).
  \item \textbf{multifieldqa\_en} (\texttt{MFQA}; QA-F1, $64$-token
  cap): single-document QA over English-language documents from
  diverse domains (legal, government, encyclopedic, etc.). Questions
  require locating specific information in a long structured
  document.
  \item \textbf{hotpotqa} (\texttt{HP}; QA-F1, $32$-token cap):
  multi-document QA from HotpotQA, requiring $2$-hop reasoning across
  multiple Wikipedia paragraphs. The model must locate facts in
  different paragraphs and chain them to produce the answer.
  \item \textbf{2wikimqa} (\texttt{2W}; QA-F1, $32$-token cap):
  multi-document QA from 2WikiMultiHopQA, with multi-hop bridges
  between Wikipedia articles---similar to hotpotqa but with explicit
  entity-bridge reasoning chains.
  \item \textbf{gov\_report} (\texttt{Gov}; ROUGE-L, $512$-token
  cap): abstractive summarization of long U.S. government reports
  (often several thousand to tens of thousands of tokens). The model
  must produce a faithful compressed summary covering key findings.
  \item \textbf{trec} (\texttt{TREC}; classification score,
  $64$-token cap): few-shot question-type classification using the
  TREC label set. The model sees many in-context exemplars and must
  classify a test question into one of $50$ fine-grained categories
  (e.g., \texttt{ABBR:abbreviation}, \texttt{NUM:date}).
  \item \textbf{passage\_retrieval\_en} (\texttt{Pass}; retrieval
  score, $32$-token cap): synthetic retrieval---given a paraphrased
  question and a set of candidate Wikipedia passages, the model must
  identify which passage contains the answer by outputting its
  index.
  \item \textbf{lcc} (\texttt{LCC}; edit similarity, $64$-token
  cap): line-level code completion over long source files (often
  $>10$K tokens). The model sees a file with the last line removed
  and must reproduce that line, requiring understanding of
  surrounding code structure.
\end{itemize}

\paragraph{Inference.} Inputs are middle-truncated to at most
$31{,}500$ tokens and decoded greedily, with per-task output caps as
listed above. The \texttt{Avg} column in
Table~\ref{tab:exp-longbench-main} is the unweighted mean over the
eight subtasks.
\subsection{Reasoning Tasks (AIME)}
\label{app:reasoning}

This appendix is organised in three parts: the AIME protocol, the
buffer configurations of the two regimes, and the per-method
calibration recipes that produced the numbers in
Section~\ref{sec:exp-reasoning}.

\paragraph{Protocol.}
All AIME runs use the K3V2 cache budget (K at $3$ bits/dim, V at $2$
bits/dim). Generation is stochastic at temperature $0.6$ and top-$p$
$0.95$, with a $32{,}768$-token output cap. For each problem we draw
eight samples and report pass@1 (avg@8). PM-KVQ is run through its official
code\footnote{\url{https://github.com/thu-nics/PM-KVQ}}; KIVI is run
with its official quantization
scheme\footnote{\url{https://github.com/jy-yuan/KIVI}. The upstream
packed kernel requires bit-width $\in \{2, 4, 8\}$; at the $3$-bit
budgets used here we substitute a bit-exact round-trip that
reproduces KIVI's quant--dequant numerics.} in the protected regime,
and the KIVI-ScaleOnly variant
(Appendix~\ref{para:kivi-scaleonly-note}) in the no-buffer regime.

\paragraph{Buffer configurations.}
During decoding the KV cache is conceptually laid out as
$[\,\text{sink (fp16)}\,|\,\text{compressed}\,|\,\text{recent (fp16)}\,]$;
the two regimes differ only in sink and recent-window sizes.
\begin{itemize}
\item  In the \emph{protected-buffer} regime we keep the first $4$ tokens as
fp16 sink and the most recent $128$ tokens as fp16 recent. The
$128$-token recent span matches PM-KVQ's protected
configuration~\citep{liu2025pmkvq}; the $4$-token sink follows the
attention-sink convention of \citet{xiao2024efficient}, overriding
PM-KVQ's native default of sink~$=1$ so that the same protected
allowance is applied uniformly across methods. TQ-MSE and Block-GTQ,
which are buffer-free by design, run under this $4$/$128$ allowance. KIVI uses its default path:
per-$(T{=}32, \text{channel})$ asymmetric quantization for K and
per-$(\text{token}, D{=}32)$ asymmetric quantization for V. KIVI's
native $128$-token fp16 residual coincides with the shared recent
window, and its $32$-token grouping is the K quantization group size
along the token axis, not an additional fp16 buffer.

\item In the \emph{no-buffer stress} regime we set both sink and recent
windows to $0$, so every attended token is served from the compressed
cache. KIVI is replaced by the KIVI-ScaleOnly variant: K uses
\emph{per-channel} quantization with a $32$-token rolling buffer of
fp32 statistics for scale refresh (the statistics never enter the
attention path), and V uses TQ-MSE. PM-KVQ is run with neither sink nor sliding window, so no token is
kept at high precision. Each K/V is quantized on arrival with
per-group ($128$-channel) asymmetric quantization; following
PM-KVQ's progressive scheme, the cache enters at $16$-bit and a
layer's entire cache is halved in bit-width ($16{\to}8{\to}4{\to}2$)
whenever it exceeds its calibrated per-layer \emph{memory} budget,
so the effective precision decreases as the sequence grows. TQ-MSE
and Block-GTQ already operate without any buffer.
\end{itemize}

\paragraph{Calibration.}
\begin{itemize}
  \setlength{\itemsep}{3pt}
  \item \textbf{Block-GTQ}: calibrated on the first $2048$ tokens of
  the WikiText-2 test split; see Appendix~\ref{app:calibration} for
  the energy score and bit-allocation procedure.

  \item \textbf{PM-KVQ}~\citep{liu2025pmkvq}: a progressive
  mixed-precision quantizer whose calibration produces two offline
  artifacts from a single PI calibration set: (i) \texttt{kv\_budgets}---a
  per-layer memory budget (shared by K and V), obtained by integer
  programming over loss-gradient sensitivity with bit choices
  $\{4, 2\}$ and a $2.5$~b/d average target (matching our K3V2
  average); (ii) \texttt{rep\_scales}---a
  SmoothQuant-style per-channel pre-scaling folded into
  \texttt{k\_proj}/\texttt{q\_proj}, obtained via a three-stage
  offline search and \emph{disabled in all our PM-KVQ runs}. The PI
  calibration set is $512$ sequences of $2048$ tokens (about $1$M
  tokens, more than two orders of magnitude beyond the $2048$ tokens
  Block-GTQ uses) randomly sampled from the WikiText-2 train split,
  with position-ids stretched by stride $4$ to an effective length of
  $8192$.

  \item \textbf{KIVI}~\citep{liu2024kivi}: tuning-free---per-channel
  K scales and per-token V scales are computed online from running
  statistics, with no calibration data. We use KIVI as-is in the
  protected regime. In the no-buffer regime we substitute
  KIVI-ScaleOnly. Since the very first tokens are quantized before
  any rolling statistics exist, we seed the estimator's per-channel
  K scale from a single forward pass over the first $64$ tokens of
  the same wikitext prefix used by Block-GTQ; this seed only governs
  the first $32$-token group of each sequence, after which the scale
  is fully refreshed online from the $32$-token rolling
  fp32-statistics buffer.

  \item \textbf{TQ-MSE}~\citep{zandieh2025turboquant}:
  data-independent. 
\end{itemize}

\section{Deployment Protocol and Extended Results}
\label{app:deployment}

This appendix gives the measurement protocol and numerical details behind
Section~\ref{sec:exp-deployment}. The deployment benchmarks run
Qwen2.5-3B-Instruct on a single H800 80GB GPU and compare Block-GTQ and
uniform TQ-MSE against an fp16 FlashAttention-2 (FA-2) baseline. Block-GTQ
metadata uses the first 64 tokens of the WikiText-2 train as its calibration prefix; decode latency
is the median per-step time over 20 timed autoregressive steps on $T$
consecutive WikiText-2 input tokens, and peak memory includes model
weights, the resident cache, and transient activations/buffers.
All three methods run the same Qwen2.5-3B-Instruct in fp16, with identical fp16 weights and fp16 attention compute, so the only quantity that varies across methods is the KV-cache representation: a dense f16 cache for the baseline versus the packed low-bit codes of uniform TQ-MSE and Block-GTQ, both quantized from the same fp16 keys and values emitted by the model's projections. The comparison is thus dtype-matched---the reported latency, memory, and perplexity differences are attributable to the cache representation alone, not to any change in weight or attention precision. The public Qwen2.5-3B-Instruct checkpoint is released in bf16; we cast it to fp16 uniformly for every method, including the baseline, so the dtype choice advantages none of them.
\subsection{Allocated Footprint Accounting}
\label{app:deployment-footprint}

The deployed cache footprint should be read as an allocated tensor
footprint, not as the ideal number of information bits. For $D=128$, a pure
3-bit code-only K+V cache would use $48+48=96$ bytes per token and KV head,
giving an ideal $512/96=5.33\times$ compression relative to fp16. The
deployed K3V3 path allocates $\approx 157$ bytes per token and KV head
(Table~\ref{tab:app-deployment-footprint}); the $\approx 61$-byte overhead
is dominated by two sources: (i) per-coordinate bit widths are rounded up to
nibble ($4$-bit) or byte ($8$-bit) storage so that GPU decoding stays
bit-shift--free, and (ii) each same-rate group carries an fp16 normalization
scalar. The overhead is a physical layout cost of serving from a packed cache, not
a change to the Block-GTQ allocation objective.

\begin{table}[!h]
\centering
\caption{\textbf{Allocated footprint per token and KV head.}
Qwen2.5-3B-Instruct K3V3 deployment, in bytes.}
\label{tab:app-deployment-footprint}
\small
\setlength{\tabcolsep}{8pt}
\begin{tabular}{l r r}
\toprule
Cache representation & Allocated bytes & Compression \\
\midrule
FP16 KV cache & 512 & $1.00\times$ \\
Ideal 3-bit code-only & 96 & $5.33\times$ \\
Deployed K3V3 & $\approx 157$ & $\approx 3.26\times$ \\
\bottomrule
\end{tabular}
\end{table}

\FloatBarrier

\subsection{Decode Latency and Memory: Three-Way Comparison}
\label{app:deployment-tqmse-reference}

Tables~\ref{tab:app-deployment-tqmse-reference} and~\ref{tab:app-deployment-tqmse-memory}
give the numerical data behind Fig.~\ref{fig:exp-e2e-decode-preview}. The two
compressed paths share the same packed-cache interface: uniform TQ-MSE applies
a uniform 3-bit K budget, while Block-GTQ assigns K bits by RoPE block.
Table~\ref{tab:app-deployment-tqmse-reference} reports per-step decode-latency
statistics, prefill time, and the median-latency decode speedup over fp16
FA-2 across five context lengths $T$; Table~\ref{tab:app-deployment-tqmse-memory} reports
KV-cache footprint, KV compression ratios, and total/non-weight peak GPU
memory at the same $T$.

\begin{table*}[!h]
\centering
\caption{\textbf{Decode latency and prefill time.}
Qwen2.5-3B-Instruct K3V3 deployment on a single H800, comparing
fp16 FA-2, uniform TQ-MSE, and Block-GTQ. Med., Mean, p5, p95 are
the median, mean, and 5th/95th percentiles of per-step decode
latency over 20 timed autoregressive steps. Prefill is the
wall-clock time to construct the cache for the full prompt: fp16
prefill is FA-2-backed; the compressed paths build the packed
cache. Med.\ speedup vs.\ fp16 FA-2 is the ratio of fp16 FA-2's
median decode latency to the method's median decode latency.}
\label{tab:app-deployment-tqmse-reference}
\small
\setlength{\tabcolsep}{3pt}
\begin{tabular}{r l r r r r r r}
\toprule
$T$ & Method & Med. & Mean & p5 & p95 & Prefill & Med.\ speedup \\
 & & ms/step & ms/step & ms & ms & s & vs.\ fp16 FA-2 \\
\midrule
16K & FP16 FA-2 & 17.84 & 17.86 & 17.76 & 18.01 & 0.37 & 1.00$\times$ \\
      & TQ-MSE    & 45.47 & 45.46 & 45.32 & 45.61 & 0.87 & 0.39$\times$ \\
      & Block-GTQ & 45.86 & 45.83 & 45.66 & 46.01 & 1.33 & 0.39$\times$ \\
\midrule
64K & FP16 FA-2 & 36.15 & 36.15 & 36.12 & 36.17 & 2.84 & 1.00$\times$ \\
      & TQ-MSE    & 45.58 & 45.59 & 45.41 & 45.68 & 9.88 & 0.79$\times$ \\
      & Block-GTQ & 45.57 & 45.56 & 45.44 & 45.63 & 16.62 & 0.79$\times$ \\
\midrule
128K & FP16 FA-2 & 70.96 & 70.95 & 70.85 & 71.03 & 9.09 & 1.00$\times$ \\
       & TQ-MSE    & 45.53 & 45.63 & 45.39 & 45.97 & 36.41 & 1.56$\times$ \\
       & Block-GTQ & 52.95 & 60.52 & 52.69 & 121.72 & 63.53 & 1.34$\times$ \\
\midrule
256K & FP16 FA-2 & \multicolumn{6}{l}{OOM} \\
       & TQ-MSE    & 60.95 & 60.97 & 60.76 & 61.21 & 141.50 & --- \\
       & Block-GTQ & 82.24 & 82.27 & 81.97 & 82.61 & 250.17 & --- \\
\midrule
512K & FP16 FA-2 & \multicolumn{6}{l}{OOM} \\
       & TQ-MSE    & 101.18 & 101.22 & 101.02 & 101.45 & 558.75 & --- \\
       & Block-GTQ & 140.27 & 140.40 & 140.04 & 140.86 & 997.19 & --- \\
\bottomrule
\end{tabular}
\end{table*}

At short context ($T\le 64$K), in-kernel decoding adds a per-step overhead
that outweighs the KV-bandwidth savings, so fp16 FA-2 is fastest. As $T$
grows, KV-bandwidth dominates and the compressed paths overtake fp16 at
$T=128$K in median per-step decode latency: Block-GTQ runs $1.34\times$
faster than fp16 FA-2, uniform TQ-MSE $1.56\times$.
Beyond this, fp16 OOMs because peak total memory exceeds $80\,$GB. Uniform TQ-MSE is consistently
faster than Block-GTQ in our current implementation because its K layout
is simpler, but this speed advantage comes at a steep quality cost: as
Appendix~\ref{app:deployment-ppl-n1000} shows, TQ-MSE's PPL collapses
across all tested context lengths while Block-GTQ stays close to fp16,
identifying Block-GTQ as the preferred operating point.

\begin{table*}[!h]
\centering
\caption{\textbf{KV footprint and peak GPU memory.} Same
Qwen2.5-3B-Instruct K3V3 deployment as
Table~\ref{tab:app-deployment-tqmse-reference}. KV comp.: ratio of
fp16's KV cache size to this method's. Peak total: total GPU memory
(model weights + KV cache + transient activations/buffers). Peak
minus weights: peak total minus the 6.17GB Qwen2.5-3B-Instruct weights,
exposing non-weight memory.}
\label{tab:app-deployment-tqmse-memory}
\small
\setlength{\tabcolsep}{3pt}
\begin{tabular}{r l r r r r}
\toprule
$T$ & Method & KV cache & KV comp. & Peak total & Peak minus weights \\
 & & MB & & GB & GB \\
\midrule
16K & FP16 FA-2 & 604.0 & 1.00$\times$ & 12.53 & 6.36 \\
      & TQ-MSE    & 155.7 & 3.88$\times$ & 7.94 & 1.77 \\
      & Block-GTQ & 186.4 & 3.24$\times$ & 7.97 & 1.80 \\
\midrule
64K & FP16 FA-2 & 2415.9 & 1.00$\times$ & 31.29 & 25.12 \\
      & TQ-MSE    & 622.9 & 3.88$\times$ & 12.94 & 6.77 \\
      & Block-GTQ & 745.5 & 3.24$\times$ & 13.06 & 6.89 \\
\midrule
128K & FP16 FA-2 & 4831.8 & 1.00$\times$ & 56.31 & 50.13 \\
       & TQ-MSE    & 1245.7 & 3.88$\times$ & 19.60 & 13.43 \\
       & Block-GTQ & 1491.1 & 3.24$\times$ & 19.85 & 13.67 \\
\midrule
256K & FP16 FA-2 & \multicolumn{4}{l}{OOM} \\
       & TQ-MSE    & 2491.4 & 3.88$\times$ & 32.93 & 26.76 \\
       & Block-GTQ & 2982.2 & 3.24$\times$ & 33.42 & 27.25 \\
\midrule
512K & FP16 FA-2 & \multicolumn{4}{l}{OOM} \\
       & TQ-MSE    & 4982.8 & 3.88$\times$ & 59.58 & 53.41 \\
       & Block-GTQ & 5964.3 & 3.24$\times$ & 60.56 & 54.39 \\
\bottomrule
\end{tabular}
\end{table*}

Block-GTQ uses slightly more resident KV memory than uniform TQ-MSE---mixed-rate
K allocation carries additional per-segment metadata---but both compressed
paths reduce the KV footprint by roughly $3.4\times$ in K3V3 budget relative to
fp16. Figure~\ref{fig:exp-e2e-decode-preview}(c) shows the peak-memory curves
for the two compressed paths nearly overlap. From
Table~\ref{tab:app-deployment-tqmse-memory}, their entire peak-memory
difference comes from the KV-cache gap (at most $0.98$ GB at
$T=512$K)---all other peak components (weights and transient
activations) are identical between the two.

\FloatBarrier

\subsection{Perplexity at Long Context}
\label{app:deployment-ppl-n1000}

Under the same deployment setting (Qwen2.5-3B-Instruct, K3V3), for each
context length $T$, we feed $T$ consecutive WikiText-2 tokens into the model
as input and score its perplexity on the next 1000 tokens---the same 1000
positions for all three methods. A larger $n_{\mathrm{ppl}}$ reduces
token-averaging noise. Calibration (tokens $[0, 64)$) and PPL
evaluation (tokens $[T, T+1000)$, with $T \ge 4096$) read from the same
WikiText-2 train stream but use non-overlapping windows, so the setup is
leakage-free.

\begin{table*}[!h]
  \centering
  \caption{\textbf{Long-context perplexity.} Same Qwen2.5-3B-Instruct
  K3V3 deployment as Table~\ref{tab:app-deployment-tqmse-reference}.
  PPL on the 1000 tokens following $T$ consecutive WikiText-2 train
  input tokens; all three methods score the same 1000 positions. $\Delta$:
  Block-GTQ's PPL increase relative to fp16. TQ-MSE/Block-GTQ: ratio
  of the two PPLs.}
  \label{tab:app-deployment-ppl-n1000}
  \small
  \setlength{\tabcolsep}{5pt}
  \begin{tabular}{r r r r r r}
    \toprule
    $T$ & FP16 FA-2 & Block-GTQ & TQ-MSE & $\Delta$ BGT vs.\ fp16 & TQ-MSE/Block-GTQ \\
    \midrule
    4K   & 6.48  & \textbf{6.66}     &     352.25     & $+2.8\%$ & $53\times$ \\
    16K  & 7.84  & \textbf{8.12}     & 299{,}419.12    & $+3.6\%$ & $36{,}879\times$ \\
    64K  & 7.47  & \textbf{7.72}     &  11{,}477.91    & $+3.3\%$ & $1{,}486\times$ \\
    128K & 16.67 & \textbf{16.94}    &  13{,}450.49    & $+1.6\%$ & $794\times$ \\
    256K & --- (OOM) & \textbf{608.11}   &  24{,}517.60    & --- & $40\times$ \\
    512K & --- (OOM) & \textbf{1{,}216.11} &  16{,}682.48    & --- & $14\times$ \\
    \bottomrule
  \end{tabular}
\end{table*}

Qwen2.5-3B-Instruct supports context up to 128K with YaRN extension. For
$T \le 128$K, Block-GTQ's PPL stays within $1.6\%$--$3.6\%$ of fp16
($\Delta$ column). At $T=256$K and $512$K, fp16 runs out of memory, and
both packed paths' PPL values rise sharply---this reflects the model itself
failing to extrapolate beyond its supported context, not cache compression.
By contrast, TQ-MSE's PPL is $14\times$ to $36{,}879\times$ higher than
Block-GTQ's at every $T$ (TQ-MSE/Block-GTQ column), including $T=4$K well
within the supported range---a failure of uniform K allocation, not of
context length. For instance, at $T = 16$K, TQ-MSE's PPL collapses to $299{,}419$,
while Block-GTQ's PPL is $8.12$, close to fp16's $7.84$.

\end{document}